\newcommand{\lin}[1]{\textcolor{red}{[Lin: #1]}}
\newcommand{\bruce}[1]{\textcolor{cyan}{[Bruce: #1]}}
\title{On the optimal regret of collaborative personalized linear bandits}
\author{
	Bruce Huang \\
	University of California, Los Angeles\\
	\texttt{brucehuang@ucla.edu} \\
	\and
	Ruida Zhou \\
	University of California, Los Angeles\\
	\texttt{ruida@ucla.edu} \\
	\and
	Lin F. Yang \\
	University of California, Los Angeles\\
	\texttt{linyang@ee.ucla.edu} \\
	\and
	Suhas Diggavi \\
	University of California, Los Angeles\\
	\texttt{suhas@ee.ucla.edu} \\
}
\begin{document}

\date{}
\maketitle

\begin{abstract}
Stochastic linear bandits are a fundamental model for sequential decision making, where an agent selects a vector-valued action and receives a noisy reward with expected value given by an unknown linear function. Although well studied in the single-agent setting, many real-world scenarios involve multiple agents solving heterogeneous bandit problems, each with a different unknown parameter. Applying single agent algorithms independently ignores cross-agent similarity and 
learning opportunities.
This paper investigates the optimal regret achievable in collaborative personalized linear bandits. 
We provide an information-theoretic lower bound that characterizes how the number of agents, the interaction rounds, and the degree of heterogeneity jointly affect regret. We then propose a new two-stage collaborative algorithm that achieves the optimal regret. Our analysis models heterogeneity via a hierarchical Bayesian framework and introduces a novel information-theoretic technique for bounding regret. Our results offer a complete characterization of when and how collaboration helps with a optimal regret bound $\tilde{O}(d\sqrt{mn})$, $\tilde{O}(dm^{1-\gamma}\sqrt{n})$, $\tilde{O}(dm\sqrt{n})$ for the number of rounds $n$ in the range of $(0, \frac{d}{m \sigma^2})$, $[\frac{d}{m^{2\gamma} \sigma^2}, \frac{d}{\sigma^2}]$ and $(\frac{d}{\sigma^2}, \infty)$ respectively, where $\sigma$ measures the level of heterogeneity, $m$ is the number of agents, and $\gamma\in[0, 1/2]$ is an absolute constant. In contrast, agents without collaboration achieve a regret bound $O(dm\sqrt{n})$ at best.

\end{abstract}
\section{Introduction}

Stochastic linear bandits are a fundamental model in sequential decision-making, where in each round, an agent selects a vector-valued action and receives a stochastic reward with expectation equal to the inner product between the action and an unknown parameter vector~\cite{lattimore2020bandit}. This setting has been extensively studied in the single-agent case, where the goal is to maximize cumulative reward over time through efficient exploration and exploitation \cite{robbins1952some, lai1985asymptotically, auer2002using}.

In this paper, we consider a generalized multi-agent setting, motivated by many practical scenarios such as distributed decision systems \cite{landgren2016distributed, zhu2020distributed}, personalized adaptive interfaces \cite{liu2003adaptive, khamaj2024adapting}, and multi-device learning~\cite{cho2022flame}—where in each round, a set of $m$ agents must each select an action and receive feedback from a stochastic, potentially heterogeneous reward function.
A natural baseline approach is to run a standard single-agent linear bandit algorithm independently for each agent. While straightforward, this approach entirely ignores potential similarities across agents. When the agents' environments are similar, such independent learning leads to redundant exploration and a suboptimal use of data. In contrast, our goal is to develop collaborative learning algorithms that adaptively share information across agents—without knowing in advance how similar or different they are. This raises a fundamental question: 
{\textbf{How can collaboration reduce regret under unknown heterogeneity?}} 

To address the question, we model heterogeneity using a hierarchical/empirical Bayesian framework, where each agent's unknown parameter vector is drawn from an \emph{unknown} population distribution. Then the variance of this population captures the degree of heterogeneity. {Although the population distribution can be arbitrary in the general framework, in this work we specifically study the Gaussian population case. The Gaussian population distribution captures the heterogeneity through the population variance, around the \emph{unknown} arbitrary mean vector without any prior on it. Besides being a useful theoretical framework, effectively this population distribution has been extensively used in algorithms for personalized FL, in supervised frameworks \cite{ozkara2023statistical} and references therein.} {We consider an unknown population distribution and do not assume further prior distribution on the population parameters since the agents will not have such information in practice.}


We then propose a novel two-stage algorithm to leverage collaboration: in the first \textit{collaborative learning stage}, agents jointly perform phased elimination to identify a promising subset of actions by pooling their information. In the second \textit{personalized learning stage}, agents refine their individual models based on local feedback. This structure allows the algorithm to smoothly interpolate between joint learning (when agents are similar) and local adaptation (when they are not). We carefully design a threshold to guide the transition between the two stages, so that collaboration is activated only when it benefits learning. In the analysis, we show that the algorithm achieves a regret bound that can be significantly smaller than the independent-learning baseline when the number of rounds is small—highlighting the benefit of collaboration in the data-scarce regime.

To show the optimality of our algorithm, we derive an \textit{information-theoretic lower bound} on the regret across different levels of heterogeneity. A central challenge in this analysis stems from the nature of heterogeneity: unlike the classical single-agent setting, where the worst-case parameter governs the lower bound, we must now consider the \textit{worst-case population distribution} over agents’ parameters. To overcome this, we introduce novel techniques involving \textit{rotation and coupling} to construct hard instances and analyze the regret. Our lower bound precisely characterizes how the population variance $\sigma^2$ affects collaborative learning. Furthermore, we show that our proposed algorithm matches the lower bound, achieving regret that is \textit{both globally and individually optimal}.

\subsection*{Our Contributions}
\begin{itemize}
    \item \textbf{Regret Characterization.} We provide, to the best of our knowledge, the first complete characterization of the benefit of collaboration by deriving \textit{matching upper and lower bounds} on cumulative regret. For a fixed level of heterogeneity $\sigma$ and number of users $m$, we characterize the regret in three regimes by varying the number of rounds $n$. Specifically:
    \begin{itemize}
        \item When $n$ is small, we refer to as the \emph{data-scarce} regime where $n = o(d/m\sigma^2)$, collaboration yields regret $\tilde{O}(d\sqrt{mn})$, matching the optimal rate for a single-agent linear bandit with $mn$ rounds—showing that full data sharing is optimal.
        \item When $n$ is moderate, we refer to as the \emph{intermediate} regime where $n = \Theta(d/m^{2\gamma}\sigma^2)$ for any $\gamma \in [0, 1/2]$, the regret $\tilde{O}(dm^{1-\gamma}\sqrt{n})$ smoothly interpolates between the two extremes, with sharp dependence on the data size $n$ and heterogeneity level $\sigma$.
        \item When $n$ is large, we refer to as the \emph{data-rich} regime where $n = \omega(d/\sigma^2)$, regret grows as $\tilde{O}(dm\sqrt{n})$, recovering the performance of independent learning per agent.
    \end{itemize}

    \item \textbf{Information-Theoretic Lower Bound.} {To establish the main result, we develop a
    minimax regret lower bound for heterogeneous multi-agent linear bandits. Unlike standard settings that focus on worst-case individual parameters, our bound considers the worst-case population distribution and introduces new coupling and rotation techniques to handle this multi-agent heterogeneous setting.}

    \item \textbf{Optimal Algorithm.} We propose Collaborative Personalized Phased
    Elimination (CP-PE), a two-stage collaborative algorithm that adapts to the heterogeneity and \textit{achieves the minimax lower bound}. 
    Notably, the algorithm is \textit{individually optimal} for each agent and \textit{globally optimal} for cumulative performance.
\end{itemize}

\subsection{Related Work}

There is extensive work in the bandit and linear bandit literature (see, e.g., the comprehensive review in \cite{lattimore2020bandit}). Most of these works focus on the single-agent, non-distributed learning setting. Another line of research explores distributed learning \cite{hillel2013distributed, wangdistributed, huang2021federated, zhu2021federated}, but the majority of these studies consider the homogeneous case, where all agents collaboratively learn a shared parameter. Among studies on collaborative bandits with heterogeneous agents, several works consider clustered bandits algorithms~\cite{gentile2014online, landgren2016distributed, ghosh2021collaborative, liu2022federated,  blaser2024federated}. In most of the works, agents in the same cluster share a common unknown parameter. However, variations in users' parameters within the same cluster can significantly change the problem. Small variations between parameters in the same cluster are allowed in~\cite{blaser2024federated}. However, the variation is bounded by $O(1/m\sqrt{n})$, and applying single-agent methods to users with such variation gives optimal regret bound. This does not answer how heterogeneous data can help through collaboration.~\cite{ghosh2021collaborative} also propose a two-stage algorithm, while the algorithm does not adapt to heterogeneity. Other works characterize similarity between the agents with different structures like networks~\cite{buccapatnam2013multi, wu2016contextual}, or consider agents sharing a global component in their unknown parameters~\cite{li2022asynchronous}. Our work considers an unknown population model characterizing heterogeneity instead. We discuss the two most related works next.


After completing our work, we recently became aware of the work~\cite{do2023multi}. They consider heterogeneous linear contextual bandits, and they measure heterogeneity as the maximum difference, $\varepsilon$, between any pair of individual parameters. They propose an algorithm and provide upper and lower bounds that depend on $\varepsilon$ for the problem. In our work, we model heterogeneity differently through unbounded population distribution, and thus the value of $\varepsilon$ can go to infinity even when the level of heterogeneity is low in our case. 
Furthermore, their upper and lower bounds do not match in all regimes with up to a $\sqrt{d}$ gap for their model, whereas our results show matching bounds in all regimes. Additionally, due to the different modeling for heterogeneity, their lower bound techniques cannot be applied to our setup. We develop novel techniques in Section~\ref{sec:lower_bound} to resolve challenges emerging in our minimax lower bound.

Another related work is~\cite{hong2022hierarchical}, which proposes a hierarchical Bayesian framework to model heterogeneity in multi-armed bandits, 
and introduces a hierarchical Thompson Sampling algorithm for the setting. A major limitation of their approach is the assumption of a known prior distribution over the population model. The case of fixed unknown population parameters is important since agents do not have information on such prior distribution in practice. However, the case is not captured in their setting and their result does not apply. In contrast to their pure Bayesian setting, we adopt an empirical Bayes approach without assuming any prior distribution on the unknown population parameters, which can make the analysis significantly more challenging. Furthermore, their work does not provide a regret lower bound, whereas we establish a matching lower bound that demonstrates the optimality of our proposed algorithm. 

\section{Problem Formulation}
We now formally present the problem definitions in the paper. 
We first recall the conventional stochastic linear bandits, let $n$ be the number of rounds. The user chooses an action $\vx_t$ from the action set $\mathcal{A} \subset \mathbb{R}^d$ in each round $t \in [n]$, and receive a stochastic reward $y_t = \vx_t^\top \vtheta + \epsilon_t$ where $\vtheta \in \mathbb{R}^d$ is an unknown parameter to the user and $\epsilon_t \sim \mathcal{N}(0, \sigma_0^2)$ is a Gaussian noise.

In our problem setting, there are $m$ users, each solving a stochastic linear bandit problem with a distinct parameter $\vtheta_i \in \mathbb{R}^d$ for $i = 1, 2, \ldots, m$. We model user heterogeneity using a hierarchical/empirical Bayesian framework~\cite{ozkara2023statistical}, where the parameters $\vtheta_i$ are sampled i.i.d. from a population-level Gaussian distribution:
\begin{align}
\label{eq:hier_model1}
\vtheta_i \sim \mathcal{N}(\vmu, \mat{C}),
\end{align}
with a covariance matrix $\mat{C}$ capturing heterogeneity across users, and an unknown mean $\vmu$. While we assume a Gaussian population distribution in the formulation, our algorithm and regret analysis extend to sub-Gaussian population distributions. See Section~\ref{app:subgaussian} for details.

To illustrate the role of heterogeneity, consider the special case where $\mat{C} = \sigma^2 \mat{I}$ for some $\sigma \geq 0$. In this case, a larger $\sigma$ corresponds to greater heterogeneity, while $\sigma = 0$ recovers the homogeneous setting in which $\vtheta_1 = \dots = \vtheta_m$. However, our results hold for general covariance matrices $\mat{C}$.

At time $t$, each user $i$ chooses an action $\vx_{i,t}$ from an action set $\mathcal{A}_i\subset \mathbb{R}^d$, and receives a reward $y_{i,t}$:
\begin{gather}
y_{i,t} = \vx_{i,t}^\top \vtheta_i + \epsilon_{i,t}, \quad \epsilon_{i,t} \sim \mathcal{N}(0, \sigma_0^2),
\end{gather}
where $\epsilon_{i,t}$ is an unknown ``local'' noise and $\sigma_0 > 0$ is the variance, assumed to be the same\footnote{Our results naturally generalize to heterogeneous local noise levels; we assume a uniform noise level $\sigma_0$ for clarity of presentation.} across users. A visualization of the model is provided in Figure~\ref{fig:hier}.
For the sake of presentation, we further assume that $\mat{C}$ is known and local noise has a unit variance,
$
\sigma_0^2 = 1.
$

\label{sec:formulation}
\begin{figure}[t]
    \centering
    \includegraphics[width=0.5\linewidth, trim={1 4cm 0 4cm},clip]{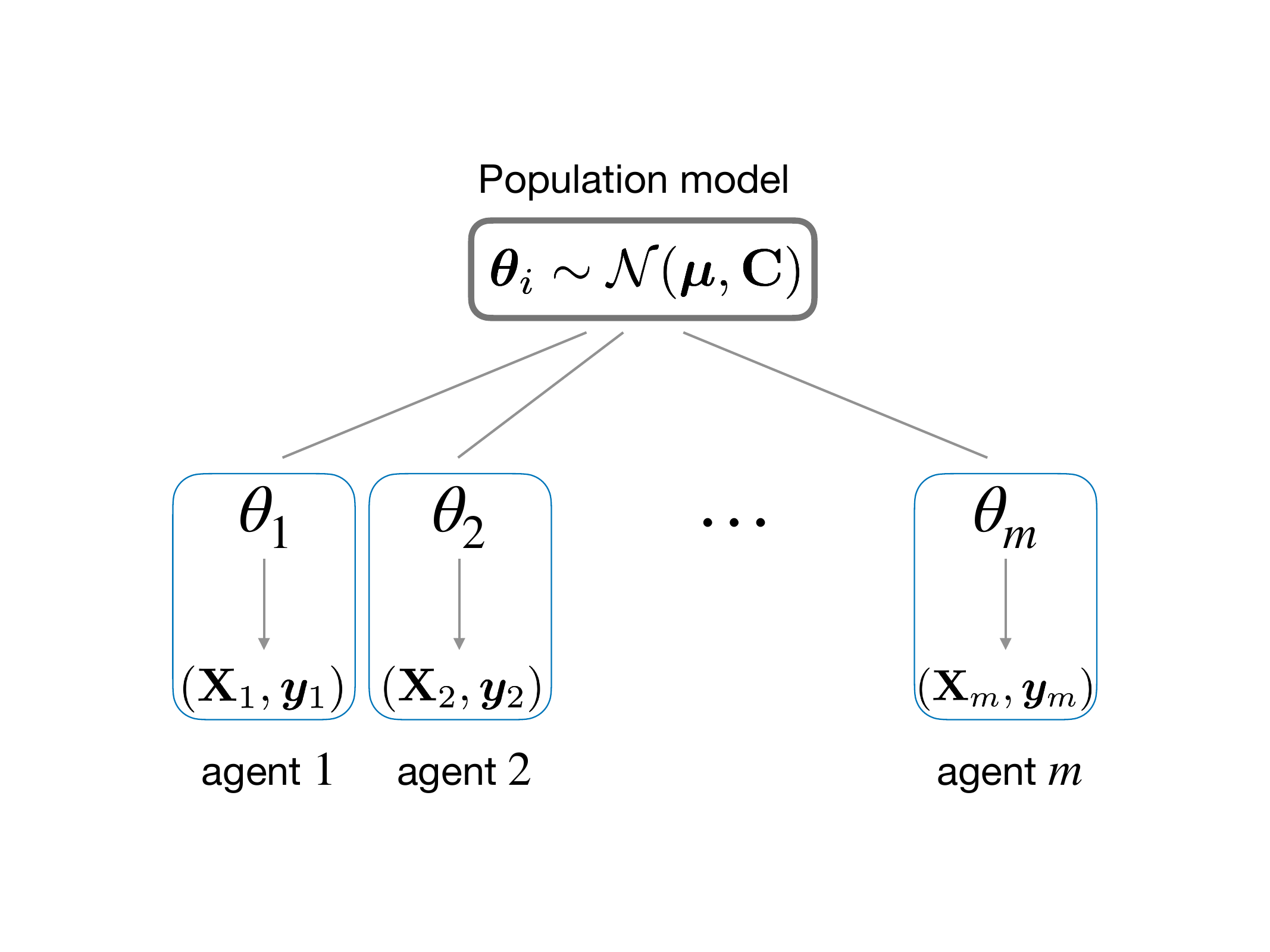}
    \caption{Hierarchical/empirical Bayesian framework}
    \label{fig:hier}
\end{figure}

We use the following regret for measuring the performance of an algorithm:

\vspace{0.1cm}
\begin{definition}[Joint pseudo-regret]
    For any $\vmu$, the joint pseudo-regret of the users is defined as
    \begin{align*}
        \hat{R}_{m,n}(\vmu) = \sum_{i=1}^m \sum_{t=1}^n \left( \langle \vx_i^*, \vtheta_i \rangle - \langle \vx_{i,t}, \vtheta_i \rangle \right)
    \end{align*}
where $\vx_i^* = \argmax_{\vx \in \mathcal{A}_i} \langle \vx, \vtheta_i \rangle$ is the optimal action for each user $i \in [m]$.
\end{definition}

\vspace{0.1cm}
\begin{definition}[Joint Bayesian regret]
    For any $\vmu$, the joint Bayesian regret of the users is defined as 
    \begin{align*}
        R_{m,n}(\vmu) = \mathbb{E} \left[ \sum_{i=1}^m \sum_{t=1}^n \left( \langle \vx_i^*, \vtheta_i \rangle - \langle \vx_{i,t}, \vtheta_i \rangle \right) \right]
    \end{align*}
    where $n>0$ is the number of total rounds of interactions. 
\end{definition}



\paragraph{Notations.}In the rest of the paper, we represent vectors in bold lowercase letters, such as $\vmu, \vtheta$. Bold capitalized letters $\mat{X}, \mat{C}$ are matrices, and $\mat{I}$ is the identity matrix. We denote $[m] = \{1,\dots,m\}$. For a real symmetric matrix $\mat{A}$, we define $\mat{A}^\dagger$ to be the pseudo-inverse of $\mat{A}$, obtained by inverting the non-zero eigenvalues and leaving the zero eigenvalues unchanged in the eigendecomposition of $\mat{A}$. We use the big-O notation $\tilde{O}$ to ignore logarithm factors. Let $\mathcal{B}_2^d = \{ \vx \in \mathbb{R}^d : \lVert \vx \rVert_2 \leq 1 \}$ denote the $d$-dimensional ball.

\section{Main Results \& Technical Overview}

\subsection{Algorithm Overview: Collaborative Personalized Phased elimination}

In this section, we briefly introduce our proposed algorithm in preparation for our main results in Section~\ref{subsec:main_result}. The regret analysis and algorithm details can be found in Section~\ref{sec:algorithm_analysis} and Section~\ref{app:alg}.

We propose Collaborative Personalized Phased Elimination (CP-PE) by adapting the method of phased elimination with $G$-optimal design exploration in~\cite{lattimore2020bandit} (Chapter 21) to our heterogeneous setting. A phased elimination algorithm shrinks the action set in each phase by eliminating the actions that are too far from the estimated parameter with a decreasing target error. Additionally, CP-PE consists of two stages: the \emph{collaborative learning stage} and the \emph{personalized learning stage}. A threshold is carefully designed to guide the transition between the two stages so that collaboration is activated only
when it benefits learning.

In the collaborative stage, all users perform the phased elimination algorithm together as if they have the same unknown parameter. This benefits learning when the target error is larger than the degree of heterogeneity since estimating $\vmu$ already gives a relatively good estimation to all $\vtheta_i$'s. However, as target error drops below the degree of heterogeneity, collaboratively learning an estimation on $\vmu$ does not give good enough estimations on $\vtheta_i$'s anymore. Thus, all users transit to the local stage, where they perform phased elimination on their own to refine their results.


\subsection{Main Results: Minimax-optimal Regret Bounds}
\label{subsec:main_result}
We provide our main result in the section. Let us consider the case where the covariance matrix is isotropic, i.e.,  $\mat{C} = \sigma^2 \mat{I}$ for some $\sigma \geq 0$. To present our results on the minimax lower bound, let us define the minimax regret.
\begin{definition}
    \label{def:minimax_reg}
    The minimax regret is defined as
    \begin{align*}
        \overline{R}_{m,n} = \inf_{\pi \in \Pi} \max_{\vmu \in \mathcal{B}_2^d} R_{m,n}(\vmu),
    \end{align*}
where $\Pi$ is the collection of all policies and $\mathcal{B}_2^d$ is the $d$-dimensional unit ball in Euclidean space.
\end{definition}
In the definition of the minimax regret, we maximize over all $\vmu$ in the unit ball, a discussion on the restriction can be found in remark~\ref{rmk:assump}.

The following Theorem~\ref{thm:main} combines our results on the minimax regret lower bound in Theorem~\ref{thm:lower_bound} and the regret analysis for CP-PE in Theorem~\ref{thm:alg_finite}. This gives us a complete characterization of the benefit of collaboration.

\begin{theorem}[Informal: A complete characterization on the benefit of collaboration]
    \label{thm:main}
    Consider the unit-ball action set $\mathcal{A} = \mathcal{B}^d_2$. Assume that $\vmu \in \mathcal{B}^d_2$ and $\sigma \leq 1$. With assumptions\footnote{The regret upper bounds holds for $n\ge md^2$. More details can be found in Theorem~\ref{thm:lower_bound} and Theorem~\ref{thm:alg_infinite}.} on the relation of $n$ and $d$, the minimax lower bound on the Bayesian regret is achieved by CP-PE omitting the logarithm terms. The regret bounds are shown in the following table:

    \begin{table}[h]
        \label{sample-table}
        \centering
        \begin{tabular}{llll}
            \toprule
            & $n = o \left( \frac{d}{m \sigma^2} \right)$     & $n = \frac{c d}{m^{2\gamma} \sigma^2}$ & $n = \omega \left( \frac{d}{\sigma^2} \right)$ \\
            \midrule
            The minimax regret  $\overline{R}_{m,n}$ & $\Omega(d\sqrt{mn})$ & $\Omega(dm^{1-\gamma}\sqrt{n})$ & $\Omega(dm\sqrt{n})$ \\
            Regret obtained by Alg.~\ref{alg:pb} & $\tilde{O}(d\sqrt{mn})$ & $\tilde{O}(dm^{1-\gamma}\sqrt{n})$ & $\tilde{O}(dm\sqrt{n})$ \\
            \bottomrule
        \end{tabular}
    \end{table}
    The columns are separated by three different regimes, where $c>0$ and $\gamma \in [0,1/2]$ are any constants. The first row is the minimax regret lower bound on the joint Bayes regret and the second row is the joint pseudo-regret obtained by CP-PE. See details in Theorem~\ref{thm:lower_bound} and Theorem~\ref{thm:alg_finite}.
\end{theorem}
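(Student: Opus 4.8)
The plan is to prove the two halves separately---the minimax lower bound (Theorem~\ref{thm:lower_bound}) and the CP-PE upper bound (Theorem~\ref{thm:alg_finite})---and then check that they coincide column by column. Throughout I write $\vtheta_i = \vmu + (\vtheta_i - \vmu)$, where the deviation $\vtheta_i-\vmu$ is Gaussian of scale $\sigma$; this separates a shared component $\vmu$, which can be learned by pooling all $mn$ samples, from a per-user component, which can only be learned from user $i$'s own $n$ samples. The single quantity that drives everything is the reward-gap resolution a phased-elimination run can reach on the ball: $N$ samples localize the optimal action to gap $\tilde O(d/\sqrt N)$ with cumulative regret $\tilde O(d\sqrt N)$ (standard $G$-optimal-design phased elimination~\cite{lattimore2020bandit}). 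Collaboration multiplies the effective sample count for the shared component by $m$, but only down to the heterogeneity scale $\sigma$: using a pooled estimate of $\vmu$ to eliminate actions for user $i$ carries an irreducible bias $\langle\vtheta_i-\vmu,\vx\rangle$ of order $\sigma$.

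For the upper bound I analyze CP-PE as two nested phased-elimination runs. In the collaborative stage all users share one design and one surviving action set and estimate $\vmu$ from the pooled rewards, whose extra per-sample variance $\langle\vtheta_i-\vmu,\vx\rangle^2$ of order $\sigma^2$ is absorbed into the noise; elimination stays valid as long as the target gap $\epsilon_\ell$ exceeds $\sigma$, which is exactly the designed transition threshold. This stage therefore behaves like a single ball bandit fed $mn$ pooled samples but stopped at gap $\sigma$ (per-user cost $\tilde\Theta(d^2/(m\sigma^2))$, joint regret $\tilde O(d^2/\sigma)$). In the personalized stage each user refines on its reduced action set using its leftover budget, with no heterogeneity bias---an ordinary single-agent run. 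Summing the two contributions and substituting the three ranges of $n$ gives: data-scarce, where the whole budget is consumed in collaboration, yielding $\tilde O(d\sqrt{mn})$; intermediate, where collaboration reaches gap $\sigma$ but the leftover per-user budget is too small to go below it, so every user plays at gap $\Theta(\sigma)$ for $\Theta(n)$ rounds and the joint regret is $\tilde O(m\sigma n) = \tilde O(dm^{1-\gamma}\sqrt n)$; and data-rich, where the personalized stage dominates and recovers $m$ independent runs, $\tilde O(dm\sqrt n)$.

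For the lower bound I reduce the Bayesian regret to two nested testing problems---one for the shared $\vmu$ and one for each individual $\vtheta_i$---using an Assouad hypercube prior $\vmu\in\{\pm\Delta\}^d$ rescaled into $\mathcal B_2^d$ and lower bounding regret by the expected number of unresolved coordinates. The obstacle specific to our model is that every observation is the heterogeneous mixture $\langle\vx,\vmu\rangle + \langle\vx,\vtheta_i-\vmu\rangle + \epsilon$, so the per-coordinate likelihoods are not product Gaussians and a direct KL computation is intractable. I resolve this by coupling the observation processes under a pair of sign-flipped means through the shared deviations and noise, which collapses the divergence to that of an effective Gaussian channel of variance $1+\sigma^2$ and yields a clean per-coordinate KL of order $\Delta^2 N_j/(1+\sigma^2)$; a rotation argument---exploiting the isotropy of the population and the rotational symmetry of the ball---forces exploration in all $d$ directions and supplies the $d$ factor for the worst-case $\vmu$. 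Applying Assouad (or Fano) then produces three regimes according to which constraint binds: the pooled budget $mn$ in the data-scarce case (giving $\Omega(d\sqrt{mn})$), the per-user budget $n$ needed to resolve $\vtheta_i$ below scale $\sigma$ in the data-rich case (giving $\Omega(dm\sqrt n)$), and a balance of the two in the intermediate case (giving $\Omega(dm^{1-\gamma}\sqrt n)$).

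Combining the halves, in each column the lower bound and the CP-PE upper bound agree up to logarithmic factors, which yields the stated $\Theta(\cdot)$ rates; the regime boundaries $n \asymp d/(m\sigma^2)$ and $n \asymp d/\sigma^2$ (up to the poly-$d$ and logarithmic factors folded into the footnote's $n\ge md^2$ assumption) are precisely where collaboration exhausts the budget and where the personalized stage begins to resolve heterogeneity. I expect the intermediate-regime lower bound to be the main difficulty: it is the only case where neither the pure-pooling nor the pure-independent reduction alone suffices, so the construction of the worst-case population distribution and the control of the heterogeneous-mixture divergence via coupling and rotation must simultaneously track $m$, $n$, and $\sigma$.
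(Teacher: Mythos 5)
Your upper-bound sketch follows essentially the paper's route: the same two-stage accounting in which the collaborative stage is a pooled phased-elimination run stopped at the heterogeneity scale, the same transition threshold, and the same per-regime optimization over the gap cutoff (Lemma~\ref{lem:upp_1} makes your ``absorb the bias into the noise'' step precise by splitting $\langle \hat{\vmu}_\ell - \vtheta_i, \vx\rangle$ into an estimation term and the population deviation $\langle \vmu - \vtheta_i, \vx\rangle$, each controlled by a Gaussian tail; the extra $\sqrt{d}$ separating the finite-action and unit-ball rates enters through the $\varepsilon$-net covering number inside the threshold). The data-scarce lower bound also matches the paper's reduction, which hands the policy the deviations for free and pays only an additive $2mn\sqrt{d\sigma^2} = o(d\sqrt{mn})$.

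The genuine gap is in the lower bound for the intermediate and data-rich regimes. Your construction places the Assouad hypercube on $\vmu$ and marginalizes the deviations $\vtheta_i - \vmu$ into an effective channel of variance $1+\sigma^2$. That testing problem only measures how hard it is to learn $\vmu$ from the pooled $mn$ samples, so it cannot yield more than $\Omega(d\sqrt{(1+\sigma^2)\,mn}) = \Omega(d\sqrt{mn})$ when $\sigma \le 1$: once the deviation is folded into the noise, the target of the test is $\vmu$, whereas the regret is measured against $\vx_i^* = \vtheta_i/\lVert \vtheta_i\rVert$, and in the data-rich regime the dominant cost is precisely each user's failure to resolve its own deviation. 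You do name the correct binding constraint (``the per-user budget $n$ needed to resolve $\vtheta_i$ below scale $\sigma$''), but the construction you describe never sets up that per-user test, and setting it up is the actual difficulty: the candidate $\vtheta_i$'s must be separated at scale $\Theta(\sqrt{1/n})$ per coordinate yet each must be marginally distributed as $\mathcal{N}(\vmu, \sigma^2 \mat{I})$ --- hence of norm about $\sigma\sqrt{d} \gg d/\sqrt{n}$ and of arbitrary direction --- so the standard hypercube prior of Section~24.2 of Lattimore--Szepesv\'ari is not a legitimate choice of adversary (the minimax is over $\vmu$ only, not over the individual parameters). The paper's resolution in Section~\ref{subsec:low_large} is to draw $\vtheta^{(0)} \sim \mathcal{N}(0,\sigma^2\mat{I})$, condition on its norm via a chi-square bound, and couple a family $\{\vtheta^{(\vz)}\}_{\vz\in\{\pm 1\}^{d-1}}$ lying on a $(d-2)$-sphere at a tuned angle $\eta$ from $\vtheta^{(0)}$, so that after rotation one coordinate absorbs the large norm, the remaining $d-1$ coordinates form a hypercube of side $\Theta(\sqrt{d/n})$, and each $\vtheta^{(\vz)}$ retains the exact population marginal; the standard Pinsker/KL argument is then run on those $d-1$ coordinates. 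Without this (or an equivalent) construction, your argument establishes only the first column of the table.
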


\paragraph{Regret Characterization}
    \label{rmk:interpretation}
In Theorem~\ref{thm:main}, we identify three distinct regimes for the number of rounds $n$, characterized by their relationship to the heterogeneity level $\sigma$:
\textbf{data-scarce regime:} $n = o(dm^{-1}\sigma^{-2})$; \textbf{intermediate regime:} $n = \Theta(dm^{-2\gamma}\sigma^{-2})$; \textbf{data-abundant regime:} $n = \omega(d\sigma^{-2})$.
Equivalently, by fixing $n$, we can interpret the regimes in terms of heterogeneity:
\begin{itemize}
    \item \textbf{Data-scarce regime} corresponds to \textbf{low heterogeneity}, $\sigma = \omega(\sqrt{dm^{-1}n^{-1}})$
    \item \textbf{Intermediate regime} corresponds to $\sigma = \Theta(\sqrt{cdm^{-2\gamma}n^{-1}})$
    \item \textbf{Data-abundant regime} corresponds to \textbf{high heterogeneity}, $\sigma = \omega(\sqrt{dn^{-1}})$
\end{itemize}
The theorem shows that the regret behavior varies across these regimes, indicating whether collaboration among users is beneficial. Below, we elaborate on each regime.

\textbf{In the data-scarce regime}, where $n$ is small or $\sigma$ is low, we prove that no policy can achieve a regret smaller than $\Omega(d \sqrt{mn})$. This matches the minimax regret lower bound for the standard stochastic linear bandit with $mn$ total samples. This result can be interpreted in two ways. First, it shows that collaboration is highly beneficial when heterogeneity is sufficiently low: users can collectively estimate the shared population parameter $\vmu$, achieving performance comparable to the homogeneous case. Second, when $n$ is small, individual users lack sufficient data to accurately estimate their own parameters $\vtheta_i$, and hence benefit from jointly estimating $\vmu$ using all $mn$ samples.

\textbf{In the data-abundant regime}, where $n$ is large or $\sigma$ is high (i.e., the highly heterogeneous case), we show that any policy incurs regret at least $\Omega(dm \sqrt{n})$. This matches the minimax lower bound for the joint regret when each user solves their own linear bandit problem independently using $n$ samples, and the regrets are summed across all $m$ users. Again, this can be seen from two perspectives. First, it implies that collaboration offers little to no benefit in this regime due to the high heterogeneity. Alternatively, when $n$ is large, each user has enough data to accurately estimate their own parameter $\vtheta_i$, making collaboration unnecessary.

\textbf{The most interesting behavior arises in the intermediate regime}, where both the number of rounds and the heterogeneity level are moderate. We capture this regime using a smooth interpolation parameter $\gamma \in [0, 1/2]$, which governs the transition between the two extremes. Our results show that regret interpolates smoothly between the bounds in the data-scarce and data-abundant regimes, depending on $\gamma$. This reveals how the benefit of collaboration varies with both the level of heterogeneity $\sigma$ and the number of samples $n$. As $\gamma$ increases from $0$ to $1/2$, the problem gradually shifts from a collaboration-friendly regime to one where users are better off acting independently.

\begin{remark}[Global and individual optimality]
    \label{rmk:ind}

In Theorem~\ref{thm:main}, we study the joint cumulative regret. However, both our minimax lower bound and the regret upper bound achieved by CP-PEcan be directly translated into individual regret bounds by dividing the results by the number of users, $m$. In other words, we establish not only global optimality in terms of joint regret but also individual optimality for each user’s regret—a stronger result. More details can be found in Section~\ref{subsec:proof_finite}.


\end{remark}

\begin{remark}[Assumptions in Theorem~\ref{thm:main}]
    \label{rmk:assump}
    In Theorem~\ref{thm:main}, we restrict $\vmu$ to lie within the unit $\ell_2$ ball $\mathcal{B}_2^d$ and assume that $\sigma \leq 1$. These assumptions serve as analogs to the standard boundedness assumptions on the unknown parameter $\vtheta$ in the classic linear bandit setting. For example,~\cite[Chapter 19]{lai1985asymptotically} considers the condition $\sup_{\vx, \vx' \in \mathcal{A}} \langle \vtheta, \vx - \vx' \rangle \leq 1$, which, when the action set is the unit ball, is equivalent to assuming $\|\vtheta\|_2 \leq 2$.
In our multi-agent setting, where each $\vtheta_i$ is drawn from the population distribution $\mathcal{N}(\vmu, \sigma^2 \mat{I})$, we naturally place the boundedness assumptions on the population parameters $\vmu$ and $\sigma$ instead.
\end{remark}

\section{Information-Theoretic Lower Bound}
\label{sec:lower_bound}

In this section, we provide details on our minimax lower bound result and a sketch proof highlighting the challenges and novel techniques.

\begin{theorem}[Minimax lower bound on the Bayesian regret]
    \label{thm:lower_bound}
    Consider the unit-ball action set $\mathcal{A} = \mathcal{B}^d_2$. Assume that $\vmu \in \mathcal{B}^d_2$ and $\sigma \leq 1$. Recall the minimax regret $\overline{R}_{m,n}$ in Definition~\ref{def:minimax_reg}. Then, we have the following results:
    \begin{itemize}
        \item For $n = o \left( \frac{d}{m \sigma^2} \right)$ and $mn \geq d$, we have $\overline{R}_{m,n} = \Omega \left( d \sqrt{mn} \right)$;
        \item For  $n = \Theta\left(\max\left[\frac{d}{m^{2\gamma} \sigma^2}, d\right]\right)$, we have $\overline{R}_{m,n} = \Omega \left( d m^{1-\gamma}  \sqrt{n} \right)$;

        
        \item For $n = \omega \left( \frac{d}{\sigma^2} \right)$, we have $\overline{R}_{m,n} = \Omega \left( d m \sqrt{n} \right)$. 
        
    \end{itemize}
\end{theorem}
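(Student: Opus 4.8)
The plan is to lower bound the minimax Bayesian regret by exhibiting, for each regime, a choice of population mean $\vmu$ together with an information-revealing (``genie'') reduction to a single-agent problem whose regret I can control, and then to take the maximum of the resulting bounds. Two complementary reductions suffice. In both I use that $\overline{R}_{m,n} \ge \inf_{\pi} R_{m,n}(\vmu_0)$ for any fixed $\vmu_0 \in \mathcal{B}_2^d$, and that handing the learner side information only lowers its regret.

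\emph{Reduction A (intermediate and data-rich regimes).} Fix $\vmu_0 = \mathbf{0}$, so $\vtheta_i \sim \mathcal{N}(\mathbf{0},\sigma^2\mat{I})$ i.i.d., and reveal $\vmu_0$. The instance decouples into $m$ independent single-agent linear bandits on $\mathcal{B}_2^d$ with an isotropic Gaussian prior of per-coordinate variance $\sigma^2$, so $\overline{R}_{m,n} \ge m\,\underline{R}_{\mathrm{iso}}$, where $\underline{R}_{\mathrm{iso}}$ is the optimal regret of one such bandit. The crux is $\underline{R}_{\mathrm{iso}} = \Omega\big(\min\{\sigma\sqrt{d}\,n,\ d\sqrt{n}\}\big)$. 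For $n \lesssim d/\sigma^2$ (the intermediate regime) I would use the posterior-mean bound $\underline{R}_{\mathrm{iso}} \ge \sum_{t}\big(\mathbb{E}\|\vtheta\| - \mathbb{E}\|\hat{\vtheta}_{t-1}\|\big)$ with $\hat{\vtheta}_{t-1}=\mathbb{E}[\vtheta\mid\mathcal{F}_{t-1}]$; by the law of total variance $\mathbb{E}\|\hat{\vtheta}_{t-1}\|^2 = d\sigma^2 - \mathbb{E}\,\mathrm{tr}(\Sigma_{t-1})$, and since the Gaussian posterior covariance satisfies $\mathrm{tr}(\Sigma_{t-1}) \ge d/(\sigma^{-2}+ (t-1)/d) \ge d\sigma^2/2$ whenever $t-1\le d/\sigma^2$ (the adaptive design matrix $\sum_{s<t}\vx_s\vx_s^\top$ has trace at most $t-1$), each round contributes a constant fraction of $\sigma\sqrt{d}$, giving $\underline{R}_{\mathrm{iso}} = \Omega(\sigma\sqrt{d}\,n)$. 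Substituting $\sigma = \sqrt{d}/(m^{\gamma}\sqrt{n})$ yields $m\cdot\sigma\sqrt{d}\,n = d\,m^{1-\gamma}\sqrt{n}$. For $n \gtrsim d/\sigma^2$ (the data-rich regime) the posterior-mean bound saturates, and I would instead invoke the classical $\Omega(d\sqrt{n})$ unit-ball lower bound; here the rotation technique enters, using the rotational invariance of the isotropic prior to reduce the Bayes regret to the worst-case regret over the sphere $\{\vtheta:\|\vtheta\|=\rho\}$ with $\rho \sim \sigma\sqrt{d}$, on which an Assouad/hypercube argument over a packing of directions at the critical angular scale applies. This gives $m\cdot\Omega(d\sqrt{n})=\Omega(dm\sqrt{n})$.

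\emph{Reduction B (data-scarce regime).} Revealing $\vmu$ now discards too much, so I instead make $\vmu$ the hard shared signal: draw $\vmu$ from the standard hypercube-type hard distribution at scale $\Delta_\mu \sim \sqrt{d/(mn)}$ (so $\|\vmu\|\le 1$), under which $\vmu$ dominates the heterogeneity since $\sigma \ll \Delta_\mu$ in this regime. I would then couple the heterogeneous instance to its homogeneous counterpart ($\sigma=0$, all $\vtheta_i=\vmu$): because $\sum_i\|\vtheta_i-\vmu\|$ is lower order relative to the signal, the two regret functionals agree up to lower-order terms, and the homogeneous problem is a single linear bandit observed through $mn$ pooled samples, whose classical lower bound is $\Omega(d\sqrt{mn})$. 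Combining, $\overline{R}_{m,n} \ge \max\{m\,\underline{R}_{\mathrm{iso}},\ d\sqrt{mn}\}$; a short case check comparing $m\sigma\sqrt{d}\,n$, $dm\sqrt{n}$, and $d\sqrt{mn}$ according to whether $m\sigma^2 n$ and $\sigma^2 n$ lie below or above $d$ shows Reduction B dominates in the data-scarce regime and Reduction A in the other two, reproducing the three stated rates.

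The main obstacle I anticipate is the single-agent isotropic bound $\underline{R}_{\mathrm{iso}}$, specifically stitching its two sub-regimes into the envelope $\min\{\sigma\sqrt{d}\,n,\ d\sqrt{n}\}$: the posterior-mean argument is tight only while the posterior barely contracts ($n\lesssim d/\sigma^2$), whereas the data-rich bound requires a full direction-estimation lower bound for a \emph{rotationally symmetric} prior rather than a single adversarial instance, which is exactly what the rotation technique must supply. Secondary difficulties are making the genie reductions and the homogeneous coupling rigorous (controlling the $\vmu$-dependence of the optimal value and the residual offsets) and tracking the side conditions on $d$ and $n$ (e.g.\ $mn\ge d$, and the threshold $n\sim d/\sigma^2$ separating the linear-in-$n$ and $\sqrt{n}$ behaviours).
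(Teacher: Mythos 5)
Your overall architecture is sound, and your Reduction~B is essentially the paper's own argument for the data-scarce regime: replace $\vx_i^*$ by the optimizer for $\vmu$, bound the heterogeneity term by $2mn\sigma\sqrt{d}=o(d\sqrt{mn})$ via Cauchy--Schwarz, and invoke the classical $mn$-sample lower bound (the paper does exactly this in \eqref{eq:small_sigma_regret}--\eqref{eq:small_sigma_classic}). Where you genuinely diverge is the intermediate regime: your posterior-contraction argument --- per-round regret at least $\mathbb{E}\lVert\vtheta\rVert-\mathbb{E}\lVert\hat{\vtheta}_{t-1}\rVert$, the law of total variance, and the convexity bound $\mathrm{tr}\bigl((\sigma^{-2}\mat{I}+V_{t-1})^{-1}\bigr)\ge d/(\sigma^{-2}+(t-1)/d)$ --- is valid (the decoupling across users after revealing $\vmu=\mathbf{0}$ is legitimate because the posterior of $\vtheta_i$ depends only on user $i$'s data), and it is considerably more elementary than the paper's treatment, which re-runs the full hypercube construction with a constant angle $\eta$. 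What your route buys is simplicity; what it gives up is that it only controls the ``norm-estimation'' component of the regret, which happens to dominate precisely when $n\lesssim d/\sigma^2$.

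The gap is the data-rich regime. There you reduce to ``a direction-estimation lower bound for a rotationally symmetric prior'' on a sphere of radius $\rho\sim\sigma\sqrt{d}$ and flag it as the obstacle --- but this is the theorem's main technical content, not a citable black box. The classical $\Omega(d\sqrt{n})$ construction needs parameters with all coordinates of comparable small magnitude and norm $\Theta(\sqrt{d/n})$, whereas here $\lVert\vtheta\rVert\sim\sigma\sqrt{d}=\omega(\sqrt{d/n})$ and the direction is arbitrary. The paper's resolution is the coupled construction of \eqref{eq:lower_theta0}--\eqref{eq:lower_rotate}: sample $\vtheta^{(0)}\sim\mathcal{N}(0,\sigma^2\mat{I})$, condition on its norm, place a $(d-1)$-dimensional hypercube $\{\vtheta^{(\vz)}\}$ on the sphere at the angle $\eta$ of \eqref{eq:large_sigma_eta} so that each $\vtheta^{(\vz)}$ retains the correct marginal, rotate so that one coordinate absorbs the large norm, and run the Assouad/Pinsker argument on the remaining $d-1$ coordinates with per-coordinate separation $\sim c_4\sqrt{d}\,n^{-1/2+\alpha/2}$; the tuning of $\eta$ is what keeps the pairwise KL divergences $O(1)$ while each coordinate still contributes $\Omega(n^{1-\alpha}/d)$ to the regret. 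Without supplying this construction (or an equivalent), the $\Omega(dm\sqrt{n})$ claim is not established. Note also that your posterior-mean bound cannot be stretched to cover this regime: it yields at most $\sum_{t\le d/\sigma^2}\sigma\sqrt{d}=d^{3/2}/\sigma$, which falls short of $d\sqrt{n}$ once $\sigma=\omega(\sqrt{d/n})$, so the direction-estimation component is genuinely unavoidable there.
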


The interpretation of the results on the regret is discussed in Remark~\ref{rmk:interpretation}, and we discuss the assumptions we made in the theorem in Remark~\ref{rmk:lower_assumptions}.

We provide a proof sketch of Theorem~\ref{thm:lower_bound} while keeping the detailed proof in Section~\ref{subsec:proof_lower}. In the data-scarce regime where $n = o(d/m\sigma^2)$, we will show that the heterogeneity is low such that the regret lower bound can not be far from the lower bound on the regret for the classic stochastic linear bandits with $mn$ samples. Thus, we get our result with the classic lower bound $\Omega(d\sqrt{mn})$ plus an additional term due to the heterogeneity, where the term is small enough such that the final bound stays in the order of $\Omega(d\sqrt{mn})$.

A technical challenge emerges in the data-rich regime. In the classic stochastic linear bandit problem, the Bayesian regret is defined as $R_n(\vtheta) = \mathbb{E} \left[ \sum_{t=1}^n \langle \vx^* - \vx_t, \vtheta \rangle \right]$ for any given $\vtheta$. A set of “bad” $\vtheta$'s is chosen to be the vertices of a $d$-dimensional hypercube. With the probabilistic method, it is shown that any policy must get a large regret on one of the $\vtheta$'s being the true parameter, and thus the minimax lower bound is shown to be $\Omega(d\sqrt{n})$ for any policy $\pi$ (\cite{lattimore2020bandit}, Section~24.2). The proof strongly depends on the fact that the chosen $\vtheta$ has the same absolute value in each coordinate and has a length in the order of $d/\sqrt{n}$.

However, in our heterogeneous setting, the parameters $\vtheta_i$'s are not arbitrary but follow the population distribution $\mathcal{N}(\vmu, \sigma^2 \mat{I})$. Thus, our minimax bound considers the worst-case $\vmu$ instead of a worst-case $\vtheta$. Especially in the data-rich regime when the heterogeneity is high, the norm of $\vtheta_i$'s will be larger than the order of $d/\sqrt{n}$ since $\sigma = \omega (\sqrt{d/n})$. Additionally, each $\vtheta_i$ can be in any direction following the population model. To address the challenges, we develop novel techniques to deal with $\vtheta_i$'s with large norms and pointing into arbitrary directions.

Due to symmetry, we can focus on analyzing the regret from one user, say, user $m$ in~\eqref{eq:lower_sym}. To address the issue of $\vtheta_m$ having a large norm, we choose the set of “bad” $\vtheta_m$'s to be the vertices of a $(d-1)$-dimensional hypercube in~\eqref{eq:lower_z} instead, which we denote them as $\{ \vtheta^{(\vz)}\}$ in the proof. Although $\vtheta^{(\vz)}$'s has a large norm, we sacrifice one of the $d$ coordinates to absorb most part of the norm and carefully construct them such that the distance between them is bad for the learner, meaning that the $\vtheta^{(\vz)}$'s are hard to distinguish so the regret is large, yet separated enough such that an agent cannot get a small regret by simply keep selecting the same action.

Now we are able to construct a set of “bad” $\vtheta^{(\vz)}$'s, we need them to marginally follow the population model as we mentioned. This is achieved by first sampling an arbitrary $\vtheta^{(0)}$ as in~\eqref{eq:lower_theta0} following the population model. Then, we form a set $\{\vtheta^{(\vz)}\}$ around $\vtheta^{(0)}$ as in~\eqref{eq:lower_angle} and Figure~\ref{fig:rotate}. 
Following the construction, each $\vtheta^{(\vz)}$ marginally follows the population model due to symmetry, yet they are artificially coupled so that they form the vertices of a $(d-1)$-dimensional hypercube. Since we consider the unit ball action set, we can rotate $\vtheta^{(\vz)}$ to align with the coordinates in~\eqref{eq:lower_rotate}.

Dealing with the large $\vtheta_i$'s and choosing the appropriate set of “bad” $\vtheta^{(\vz)}$'s are the most challenging parts in the proof. We achieve this by carefully choosing both the correct order of $n$ and the suitable coefficients to form the suitable angle between the $\vtheta^{(\vz)}$'s in~\eqref{eq:large_sigma_eta}. Unlike choosing the specific bad $\vtheta$'s in the regret lower bound for classic stochastic linear bandit problem~\cite{lattimore2020bandit} (Section 24.2), our proof also provides a way to show the lower bound when given the knowledge that the unknown parameter $\vtheta$'s are large. In other words, given the information that the norm of $\vtheta$ is large does not make the problem easier as larger $\vtheta$ might seem to be easier to distinguish. In contrast, our construction shows that the order of the regret remains the same.

\section{Proof of Theorem~\ref{thm:lower_bound}}
\label{subsec:proof_lower}

Let $\mathcal{A} = \mathcal{B}_d^2$ be the unit ball action set. We prove the lower bound in the three regimes separately. 
\subsubsection{Lower bound for large \texorpdfstring{$\sigma$}{TEXT}}
\label{subsec:low_large}
We start from the third regime, the data-rich regime, in Theorem~\ref{thm:lower_bound}. Equivalently, we have
\begin{align*}
    \sigma = \Theta \left( \sqrt{\frac{d}{n^{1-\alpha}}} \right)
\end{align*}
for some $\alpha \in [0,1]$.
By the definition of the regret, for any given $\vmu \in \mathbb{R}^d$, we have
\begin{align}
    R_{m,n}(\vmu)
    =&\; \mathbb{E} \left[ \sum_{i=1}^m \sum_{t=1}^n \langle \vx_i^* - \vx_{i,t}, \vtheta_i \rangle \right] \nonumber \\
    =&\; \mathbb{E} \left[ \sum_{i=1}^m \sum_{t=1}^n \lVert \vtheta_i \rVert_2 - \langle \vx_{i,t}, \vtheta_i \rangle \right] \nonumber \\
    =&\; \sum_{i=1}^m \mathbb{E} \left[ \sum_{t=1}^n \lVert \vtheta_i \rVert_2 - \langle \vx_{i,t}, \vtheta_i \rangle \right] \nonumber \\
    =&\; m \mathbb{E} \left[ \sum_{t=1}^n \lVert \vtheta_m \rVert_2 - \langle \vx_{m,t}, \vtheta_m \rangle \right]. \label{eq:lower_sym}
\end{align}
The second equality comes from the fact that $\max_{\vx \in \mathcal{B}_d^2} \langle \vx, \vtheta_i \rangle = \lVert \vtheta_i \rVert_2$ and the last equality is due to symmetry. Thus, let us focus on the term
\begin{align*}
    \mathbb{E} \left[ \sum_{t=1}^n \lVert \vtheta_m \rVert_2 - \langle \vx_{m,t}, \vtheta_m \rangle \right].
\end{align*}
Despite the expression, note that the actions $\{ \vx_{m,t} \}_{t=1:n}$ still depend on the parameters of all users, $\{ \vtheta_i \}_{i=1:m}$.
Consider the case when
\begin{align}
    \sigma = c_1 d^{\frac{1}{2}} n^{-\frac{1}{2} + \alpha} \label{eq:large_sigma_c_1}
\end{align}
for some constant $c_1 > 0$ and $\alpha \geq 0$. Notice that the results can be easily extended to the case where $c_1 d^{\frac{1}{2}} n^{-\frac{1}{2} + \alpha} \leq \sigma \leq c_1' d^{\frac{1}{2}} n^{-\frac{1}{2} + \alpha}$.

Suppose that we have a $d$-dimensional random vector $\vtheta^{(0)} \in \mathcal{N}(0,\sigma^2 \mat{I})$. We have the following lemma:
\begin{lemma}
    Let $z_1, \dots, z_d$ be $d$ independent random variables that follows the normal distribution
    \begin{align*}
        z_i \sim \mathcal{N}(0,\sigma^2) \quad \forall i \in [d]
    \end{align*}
    for some $\sigma > 0$. Then, we have
    \begin{align*}
        \mathbb{P} \left( \sum_{i=1}^d z_i^2 \in \left[ \frac{d \sigma^2}{2}, \frac{3 d \sigma^2}{2} \right] \right) \geq c_2
    \end{align*}
    for some constant $c_2 \in (0,1)$.
\end{lemma}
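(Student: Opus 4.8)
The plan is to reduce the statement to a standard concentration estimate for a chi-squared variable and then patch together a second-moment bound for large $d$ with a direct finite-case argument for small $d$.

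First I would normalize away $\sigma$. Writing $z_i = \sigma w_i$ with $w_i \sim \mathcal{N}(0,1)$ independent, the event $\{\sum_i z_i^2 \in [d\sigma^2/2,\, 3d\sigma^2/2]\}$ is identical to $\{W \in [d/2,\, 3d/2]\}$, where $W := \sum_{i=1}^d w_i^2 \sim \chi^2_d$. This shows at the outset that the claimed constant $c_2$ can be taken independent of $\sigma$, as the statement requires. I would then record the two elementary facts $\mathbb{E}[W] = d$ and $\mathrm{Var}(W) = 2d$.

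For the bulk of the dimensions I would simply invoke Chebyshev's inequality: $\mathbb{P}(|W - d| \geq d/2) \leq \mathrm{Var}(W)/(d/2)^2 = 8/d$. Hence for every $d \geq 16$ we obtain $\mathbb{P}(W \in [d/2, 3d/2]) = \mathbb{P}(|W - d| \leq d/2) \geq 1 - 8/d \geq 1/2$. A Chernoff bound on the chi-squared moment generating function (or a Laurent–Massart tail bound) would give exponential decay in $d$ and a smaller threshold, but Chebyshev already suffices here and keeps the argument self-contained, so I would not bother optimizing the constant.

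The remaining point, and the only real subtlety, is uniformity over the finitely many small dimensions $d \in \{1,\dots,15\}$, which no single concentration estimate covers. For each such fixed $d$ the interval $[d/2, 3d/2]$ has positive length and lies inside $(0,\infty)$, where the $\chi^2_d$ density is strictly positive and continuous; therefore $p_d := \mathbb{P}(W \in [d/2, 3d/2]) > 0$. Since this is a finite collection, $p_{\min} := \min_{1 \leq d \leq 15} p_d > 0$, and setting $c_2 := \min\{1/2,\, p_{\min}\} \in (0,1)$ delivers the bound for all $d$ at once. The main obstacle is thus exactly this matter of getting a single positive constant valid for every $d$: one must supplement the large-$d$ concentration with strict positivity of the density on a finite index set and take a minimum, rather than hoping for one closed-form estimate.
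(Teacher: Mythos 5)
Your proof is correct and follows the same route the paper intends: the paper's entire justification is the one-line remark that $\sum_i z_i^2/\sigma^2$ is chi-squared, and your argument simply supplies the missing details (normalization to $\chi^2_d$, Chebyshev with $\mathrm{Var}(W)=2d$ for $d\geq 16$, and strict positivity of the density over the finitely many small $d$ to get a single constant $c_2$ uniform in both $d$ and $\sigma$). The care you take about uniformity in $d$ is warranted, since $c_2$ is later absorbed into universal constants in the $\Omega(d\sqrt{n})$ bound, and is the only point the paper's one-sentence justification glosses over.
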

The lemma holds since $\sum_{i=1}^d z_i^2 / \sigma^2$ follows the chi-square distribution.

Let us consider the case that $\vmu = 0$. Since $\lVert \vtheta^{(0)} \rVert^2 = \sum_{\ell=1}^d ( \theta^{(0)}_\ell )^2$ and $\theta^{(0)}_\ell \sim \mathcal{N}(0,\sigma^2)$, by the lemma, we have
\begin{align*}
    \mathbb{P} \left( \lVert \vtheta^{(0)} \rVert^2 \in \left[ \frac{d \sigma^2}{2}, \frac{3d \sigma^2 }{2} \right] \right) \geq c_2
\end{align*}
with some constant $c_2 \in (0,1)$. That is,
\begin{align}
    \mathbb{P} \left( \lVert \vtheta^{(0)} \rVert \in \left[ \sigma \sqrt{\frac{d}{2}}, \sigma \sqrt{\frac{3d}{2}} \, \right] \right)
    =\mathbb{P} \left( \lVert \vtheta^{(0)} \rVert \in \left[ \frac{c_1 \sqrt{2}}{2} d n^{-\frac{1}{2} + \alpha},  \frac{c_1 \sqrt{6}}{2} d n^{-\frac{1}{2} + \alpha} \right] \right)
    \geq c_2. \label{eq:lower_theta0}
\end{align}
We denote the event by $\mathcal{E}$, i.e.,
\begin{align*}
    \mathcal{E} = \left\{ \lVert \vtheta^{(0)} \rVert \in \left[ \frac{c_1 \sqrt{2}}{2} d n^{-\frac{1}{2} + \alpha},  \frac{c_1 \sqrt{6}}{2} d n^{-\frac{1}{2} + \alpha} \right] \right\}
\end{align*}
and thus $\mathbb{P}(\mathcal{E}) \geq c_2$. Let $\mathcal{S}^{d-1}(\lVert \vtheta^{(0)} \rVert)$ be the sphere centered at $\pmb{0} \in \mathbb{R}^d$ with radius $\lVert \vtheta^{(0)} \rVert$ in the $d$ dimensional space. Let us define a distribution on a set of $d$-dimensional random vectors $\{ \vtheta^{(\vz)}: \vz \in \mathcal{Z} \}$ where the index set $\mathcal{Z} = \{ \pm 1 \}^{d-1}$. Given the event $\mathcal{E}$, let us write
\begin{align}
    \lVert \vtheta^{(0)} \rVert = c_3 d n^{- \frac{1}{2} + \alpha} \label{eq:lower_angle}
\end{align}
where $c_3 \in \left[ \frac{c_1 \sqrt{2}}{2}, \frac{c_1 \sqrt{6}}{2} \right]$. Let us denote
\begin{align*}
    \tilde{\mathcal{S}}^{d-2}_n(\vtheta^{(0)}) = \left\{ \vtheta \in \mathcal{S}^{d-1}(\lVert \vtheta^{(0}) \rVert): \frac{\vtheta^\top \vtheta^{(0)}}{\lVert \vtheta \rVert \lVert \vtheta^{(0)} \rVert} = \cos \eta \right\}
\end{align*}
where
\begin{gather}
    \eta = \sin^{-1} \left( \frac{c_4 \sqrt{d(d-1)} n^{-\frac{1}{2} + \frac{\alpha}{2}}}{\lVert \vtheta^{(0)} \rVert} \right)
    = \sin^{-1} \left( \frac{c_4}{c_3} \sqrt{\frac{d-1}{d n^\alpha}} \right) \in \left(0, \frac{\pi}{2} \right), \label{eq:large_sigma_eta} \\
    c_4 = \min \left\{ \frac{c_1 \sqrt{2}}{2}, \sqrt{\frac{c_1}{6\sqrt{2}}} \right\} \leq c_3. \nonumber
\end{gather}

\begin{figure}
    \centering
    \includegraphics[width=0.5\linewidth, trim={0 0 0 0},clip]{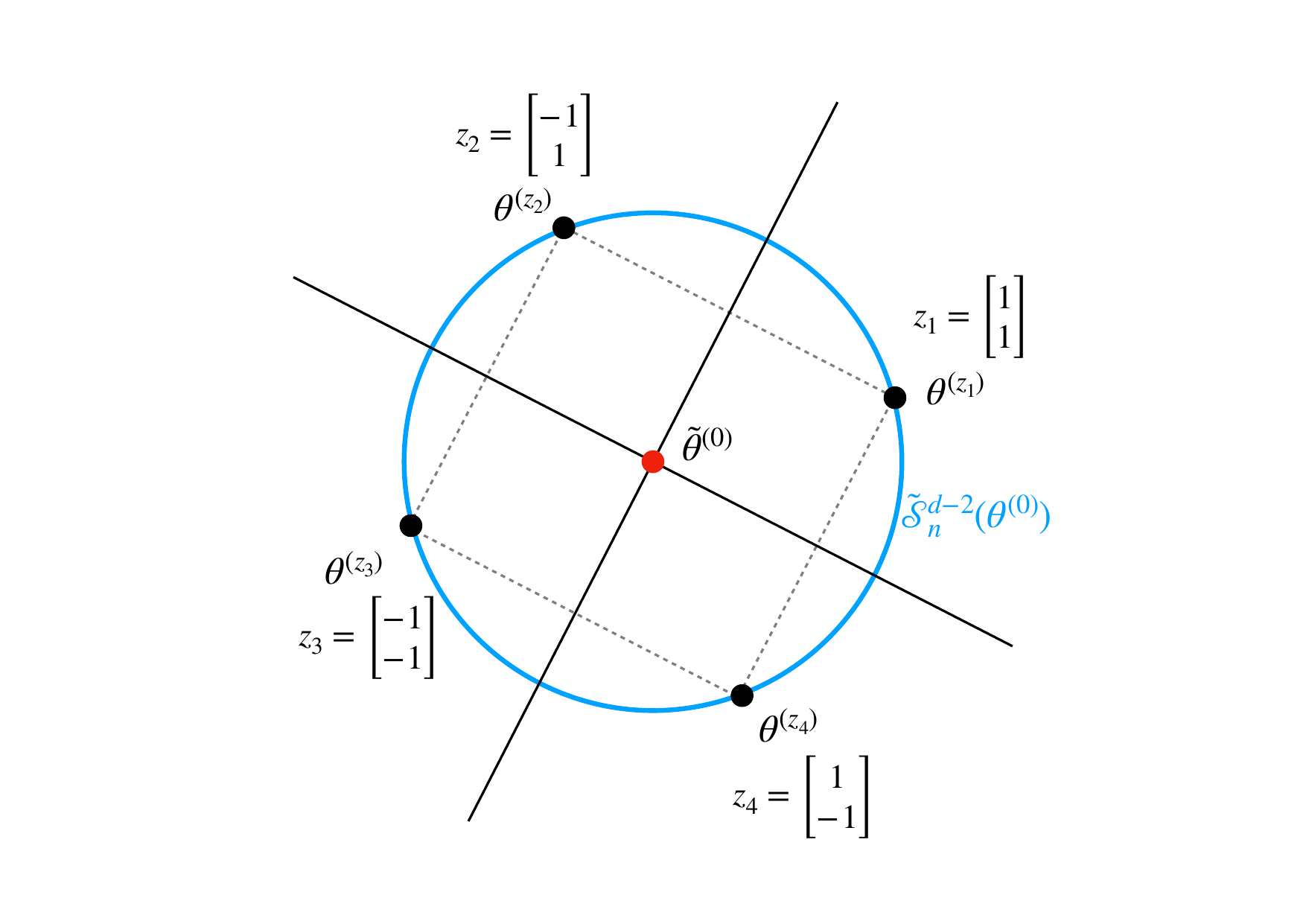}
    \caption{Constructing the $\vtheta^{(\vz)}$'s}
    \label{fig:rotate}
\end{figure}

The set $\tilde{\mathcal{S}}^{d-2}_n(\vtheta^{(0)})$ is a $(d-2)$-dimensional sphere on $\mathcal{S}^{d-1}(\lVert \vtheta^{(0)} \rVert)$. Let $\tilde{\vtheta}^{(0)}$ denote the center of $\tilde{\mathcal{S}}^{d-2}_n(\vtheta^{(0)})$. Notice that $\tilde{\vtheta}^{(0)} \neq \vtheta^{(0)}$. We want to randomly pick a set $\{ \vtheta^{(\vz)}: \vz \in \mathcal{Z} \} \subset \tilde{\mathcal{S}}^{d-2}_n(\vtheta^{(0)})$ such that $\{ \vtheta^{(\vz)}: \vz \in \mathcal{Z} \}$ is the set of vertices of a $(d-1)$-dimensional hyper-cube. To be more specific on the indices, each $(\vtheta^{(\vz)}, \vtheta^{(\vz')})$ pair forms the end points of an edge if $\lVert \vz - \vz' \rVert_1 = 2$, i.e., $\vz$ and $\vz'$ only differ in one coordinate.  To do so, we can simply start from uniformly choosing a $(d-1)$-dimensional basis, and find the $(d-1)$-dimensional hyper-cube on $\tilde{\mathcal{S}}^{d-2}_n(\vtheta^{(0)})$ that align with the basis. Due to the symmetry of sampling $\vtheta^{(0)}$ and the basis, we have
\begin{align}
    \vtheta^{(\vz)} \sim \mathcal{N}(0, \sigma^2 \mat{I}) \quad \forall \vz \in \mathcal{Z}. \label{eq:lower_z}
\end{align}
Now, with some rotation matrix $W \in \mathbb{R}^{d \times d}$, we can rotate $\vtheta^{(0)}$ and the hyper-cube $\{ \vtheta^{(\vz)}: z \in \mathcal{Z} \}$ such that $W\vtheta^{(0)}$ points toward the first coordinate and the hyper-cube aligns with the standard basis. Then, we have
\begin{align}
    \tilde{\vtheta}^{(\vz)} = W \vtheta^{(\vz)} \quad \forall z \in \mathcal{Z} \label{eq:lower_rotate}
\end{align}
where 
\begin{align*}
    \tilde{\theta}^{(\vz)}_\ell = 
    \begin{cases}
        \lVert \vtheta^{(0)} \rVert \cos \eta \quad& \ell = 1, \\
       \frac{z_\ell}{\sqrt{d-1}} \lVert \vtheta^{(0)} \rVert \sin \eta 
       = z_\ell c_4 \sqrt{d} n^{-\frac{1}{2} + \frac{\alpha}{2}} \quad& \ell = 2, \dots, d,
    \end{cases}
\end{align*}
and thus
\begin{align*}
    \lVert \tilde{\vtheta}^{(\vz_1)} - \tilde{\vtheta}^{(\vz_2)} \rVert = 2 c_4 n^{-\frac{1}{2} + \frac{\alpha}{2}} \quad \forall \vz_1, \vz_2 \in \mathcal{Z}, \lVert \vz_1 - \vz_2 \rVert_1 = 2.
\end{align*}

Thus, $\{\vtheta^{(\vz)}\}$ are the vertices of a $(d-1)$-dimensional hypercube aligning with the standard basis, and each of the vertices has the same value in the first coordinate.

Now we want to show a lower bound for the average regret with respect to $\{ \tilde{\vtheta}^{(\vz)}: \vz \in \mathcal{Z} \}$. For simplicity, let us denote $\Theta = \{ \vtheta^{(\vz)}: \vz \in \mathcal{Z} \}$. Suppose that $\vz$ is uniformly randomly sampled from $\{ \pm 1 \}^{d-1}$ and we set $\vtheta_m = \vtheta^{(\vz)}$. Since $\vtheta_m$ and $\vtheta^{(\vz)}$ follow the same distribution, we have
\begin{align}
    \mathbb{E} \left[ \sum_{t=1}^n \lVert \vtheta_m \rVert_2 - \langle \vx_{m,t}, \vtheta_m \rangle \right]
    =&\; \mathbb{E} \left[ \sum_{t=1}^n \lVert \vtheta^{(\vz)} \rVert_2 - \langle \vx_{m,t}, \vtheta^{(\vz)} \rangle \bigg| \vz \right] \nonumber \\
    =&\; \frac{1}{\lvert \mathcal{Z} \rvert}\sum_{\vz \in \mathcal{Z}} \mathbb{E} \left[ \sum_{t=1}^n \lVert \vtheta^{(\vz)} \rVert - \langle \vx_{m,t}, \vtheta^{(\vz)} \rangle \bigg| \vz \right] \nonumber \\
    =&\; \frac{1}{\lvert \mathcal{Z} \rvert}\sum_{\vz \in \mathcal{Z}} \mathbb{E} \left[ \sum_{t=1}^n \lVert \vtheta^{(\vz)} \rVert - \langle W^{-1} \vx_{m,t}, W \vtheta^{(\vz)} \rangle \bigg| \vz \right] \nonumber \\
    =&\; \frac{1}{\lvert \mathcal{Z} \rvert}\sum_{\vz \in \mathcal{Z}} \mathbb{E} \left[ \sum_{t=1}^n \lVert \vtheta^{(\vz)} \rVert - \langle \tilde{\vx}_{m,t}, \tilde{\vtheta}^{(\vz)} \rangle \bigg| \vz \right], \label{eq:large_sigma_reg_1}
\end{align}
where $\tilde{\vx}_{m,t}$ still falls in the unit ball due to the rotation.
For any $t \in [n]$ ,we have
\begin{align*}
    &\; \lVert \vtheta^{(\vz)} \rVert - \langle \tilde{\vx}_{m,t}, \tilde{\vtheta}^{(\vz)} \rangle \\
    =&\; \left( \lVert \vtheta^{(\vz)} \rVert \cos^2 \eta - \tilde{x}_{m,t,1} \tilde{\theta}^{(\vz)}_1 \right) + \sum_{\ell=2}^d \left( \lVert \vtheta^{(\vz)} \rVert \frac{\sin^2 \eta}{d-1} - \tilde{x}_{m,t,\ell} \tilde{\theta}^{(\vz)}_\ell \right) \\
    =&\; \left( \lVert \vtheta^{(\vz)} \rVert \cos^2 \eta - \tilde{x}_{m,t,1} \lVert \vtheta^{(\vz)} \rVert (\cos \eta) \right) \\
    &\; + \sum_{\ell=2}^d \left( \lVert \vtheta^{(\vz)} \rVert \cdot \frac{\sin^2 \eta}{d-1} - \tilde{x}_{m,t,\ell} \left( \lVert \vtheta^{(\vz)} \rVert \cdot \frac{\sin \eta}{\sqrt{d-1}} \right) z_\ell \right) \\
    =&\; \left( \frac{\lVert \vtheta^{(\vz)} \rVert \cos^2 \eta}{2} - \tilde{x}_{m,t,1} \lVert \vtheta^{(\vz)} \rVert \cos \eta + \frac{\lVert \vtheta^{(\vz)} \rVert \tilde{x}_{m,t,1}^2}{2} \right) + \left( \frac{\lVert \vtheta^{(\vz)} \rVert \cos^2 \eta}{2} - \frac{\lVert \vtheta^{(\vz)} \rVert \tilde{x}_{m,t,1}^2}{2} \right) \\
    &\; + \sum_{\ell=2}^d \left( \frac{\lVert \vtheta^{(\vz)} \rVert \sin^2 \eta}{2(d-1)} - \tilde{x}_{m,t,\ell} \left( \lVert \vtheta^{(\vz)} \rVert \cdot \frac{\sin \eta}{\sqrt{d-1}} \right) z_\ell + \frac{\lVert \vtheta^{(\vz)} \rVert \tilde{x}_{m,t,\ell}^2}{2} \right) \\
    &\; + \sum_{\ell=2}^d \left( \frac{\lVert \vtheta^{(\vz)} \rVert \sin^2 \eta}{2(d-1)} - \frac{\lVert \vtheta^{(\vz)} \rVert \tilde{x}_{m,t,\ell}^2}{2} \right) \\
    =&\; \frac{\lVert \vtheta^{(\vz)} \rVert \cos^2 \eta}{2} \left( 1 - 2 \frac{\tilde{x}_{m,t,1}}{\cos \eta} + \frac{\tilde{x}_{m,t,1}^2}{\cos^2 \eta} \right) + \left( \frac{\lVert \vtheta^{(\vz)} \rVert \cos^2 \eta}{2} - \frac{\lVert \vtheta^{(\vz)} \rVert \tilde{x}_{m,t,1}^2}{2} \right) \\
    &\; + \sum_{\ell=2}^d \frac{\lVert \vtheta^{(\vz)} \rVert \sin^2 \eta}{2} \left( \frac{1}{d-1} - \frac{2 \tilde{x}_{m,t,\ell} z_\ell}{\sqrt{d-1} \sin \eta} + \frac{\tilde{x}_{m,t,\ell}^2}{\sin^2 \eta} \right) \\
    &\; + \sum_{\ell=2}^d \left( \frac{\lVert \vtheta^{(\vz)} \rVert \sin^2 \eta}{2(d-1)} - \frac{\lVert \vtheta^{(\vz)} \rVert \tilde{x}_{m,t,\ell}^2}{2} \right) \\
    =&\; \frac{\lVert \vtheta^{(\vz)} \rVert \cos^2 \eta}{2} \left( 1 - \frac{\tilde{x}_{m,t,1}}{\cos \eta} \right)^2 + \sum_{\ell=2}^d \frac{\lVert \vtheta^{(\vz)} \rVert \sin^2 \eta}{2} \left( \frac{1}{\sqrt{d-1}} - \frac{\tilde{x}_{m,t,\ell}}{\sin \eta} \right)^2 \\
    &\; + \left( \frac{\lVert \vtheta^{(\vz)} \rVert}{2} - \frac{\lVert \vtheta^{(\vz)} \rVert}{2} \sum_{\ell=1}^d \tilde{x}_{m,t,\ell}^2 \right) \\
    \geq&\; \sum_{\ell=2}^d \frac{\lVert \vtheta^{(\vz)} \rVert \sin^2 \eta}{2} \left( \frac{1}{\sqrt{d-1}} - \frac{\tilde{x}_{m,t,\ell} \cdot z_\ell}{\sin \eta} \right)^2.
\end{align*}
Thus, we have
\begin{align}
    &\; \mathbb{E} \left[ \sum_{t=1}^n \lVert \vtheta^{(\vz)} \rVert - \langle \tilde{\vx}_{i,t}, \tilde{\vtheta}^{(\vz)} \rangle \bigg| \vz \right] \nonumber \\
    \geq&\; \mathbb{P}( \mathcal{E}) \, \mathbb{E} \left[ \sum_{t=1}^n \sum_{\ell=2}^d \frac{\lVert \vtheta^{(\vz)} \rVert \sin^2 \eta}{2} \left( \frac{1}{\sqrt{d-1}} - \frac{\tilde{x}_{m,t,\ell} \cdot z_\ell}{\sin \eta} \right)^2 \bigg| \vz, \mathcal{E} \right] \nonumber \\
    \geq&\; c_2 \mathbb{E} \left[ \sum_{\ell=2}^d \frac{\lVert \vtheta^{(\vz)} \rVert \sin^2 \eta}{2} \sum_{t=1}^{n'} \left( \frac{1}{\sqrt{d-1}} - \frac{\tilde{x}_{m,t,\ell} \cdot z_\ell}{\sin \eta} \right)^2 \bigg| \vz, \mathcal{E} \right] \nonumber \\
    =&\; c_2 \mathbb{E} \left[ \sum_{\ell=2}^d \frac{c_4^2 d(d-1) n^{-1+\alpha}}{2 \lVert \vtheta^{(0)} \rVert} \sum_{t=1}^{n'} \left( \frac{1}{\sqrt{d-1}} - \frac{\tilde{x}_{m,t,\ell} \cdot z_\ell}{\sin \eta} \right)^2 \bigg| \vz, \mathcal{E} \right] \nonumber \\
    =&\; c_2 \mathbb{E} \left[ \sum_{\ell=2}^d \frac{c_4^2 d(d-1) n^{-1+\alpha}}{2 c_3 d n^{-\frac{1}{2}+\alpha}} \sum_{t=1}^{n'} \left( \frac{1}{\sqrt{d-1}} - \frac{\tilde{x}_{m,t,\ell} \cdot z_\ell}{\sin \eta} \right)^2 \bigg| \vz, \mathcal{E} \right] \nonumber \\
    =&\; \frac{c_2 (d-1) }{2 \sqrt{n}} \sum_{\ell=2}^d \mathbb{E} \left[ \frac{c_4^2}{c_3} \sum_{t=1}^{n'} \left( \frac{1}{\sqrt{d-1}} - \frac{\tilde{x}_{m,t,\ell} \cdot z_\ell}{\sin \eta} \right)^2 \bigg| \vz, \mathcal{E} \right] \nonumber \\
    =&\; \frac{c_2 (d-1) }{2 \sqrt{n}} \sum_{\ell=2}^d \mathbb{E} \left[ \frac{c_4^2}{c_3 \sin^2 \eta} \sum_{t=1}^{n'} \left( \frac{\sin \eta}{\sqrt{d-1}} - \tilde{x}_{m,t,\ell} \cdot z_\ell \right)^2 \bigg| \vz, \mathcal{E} \right] \nonumber \\
    =&\; \frac{c_2 (d-1) }{2 \sqrt{n}} \sum_{\ell=2}^d \mathbb{E} \left[ \frac{c_3 d n^\alpha}{d-1} \sum_{t=1}^{n'} \left( \frac{\sin \eta}{\sqrt{d-1}} - \tilde{x}_{m,t,\ell} \cdot z_\ell \right)^2 \bigg| \vz, \mathcal{E} \right] \nonumber \\
    \geq&\; \frac{c_2 d n^{-\frac{1}{2} + \alpha}}{2} \sum_{\ell=2}^d \mathbb{E} \left[ \frac{\sqrt{2} c_1}{2} \sum_{t=1}^{n'} \left( \frac{\sin \eta}{\sqrt{d-1}} - \tilde{x}_{m,t,\ell} \cdot z_\ell \right)^2 \bigg| \vz, \mathcal{E} \right] \nonumber \\
    =&\; c_5 d n^{-\frac{1}{2} + \alpha} \sum_{\ell=2}^d \vz, \mathbb{E} \left[ \sum_{t=1}^{n'} \left( \frac{\sin \eta}{\sqrt{d-1}} - \tilde{x}_{m,t,\ell} \cdot z_\ell \right)^2 \bigg| \mathcal{E} \right] \label{eq:large_sigma_reg_2}
\end{align}
where $c_5 = (\sqrt{2} c_1 c_2)/4$ and
\begin{gather*}
    n' = \min \left\{ n, \min \left\{\tau : \sum_{t=1}^\tau \tilde{x}_{m,t,\ell}^2 \geq \frac{n \sin^2 \eta}{d-1 } \right\} \right\}.
\end{gather*}
Now, we want to show that give a $(\vz, \vz')$ pair such that $\vz$ and $\vz'$ only differ in the $\ell$-th coordinate, the regret
\begin{align*}
    \mathbb{E} \left[ \sum_{t=1}^{n'} \left( \frac{\sin \eta}{\sqrt{d-1}} - \tilde{x}_{m,t,\ell} \cdot z_\ell \right)^2 \bigg| \vz, \mathcal{E} \right] + \mathbb{E} \left[ \sum_{t=1}^{n'} \left( \frac{\sin \eta}{\sqrt{d-1}} - \tilde{x}_{m,t,\ell} \cdot z'_\ell \right)^2 \bigg| \vz', \mathcal{E} \right]
\end{align*}
is bounded by some lower bound.
For any $w \in \{1,-1\}$, we have
\begin{align*}
    \sum_{t=1}^{n'} \left( \frac{\sin \eta}{\sqrt{d-1}} - \tilde{x}_{m,t,\ell} \cdot w \right)^2
    \leq&\; 2 \sum_{t=1}^{n'} \frac{\sin^2 \eta}{d-1} + 2 \sum_{t=1}^{n'} \tilde{x}_{m,t,\ell}^2 \\
    \leq&\; \frac{2n\sin^2 \eta}{d-1} + 2 \left( \frac{n\sin^2 \eta}{d-1} + 1 \right) \\
    =&\; \frac{4n}{d-1} \cdot \frac{c_4^2}{c_3^2} \cdot \frac{d-1}{d n^\alpha} + 2 \\
    \leq&\; \frac{4 c_4^2 n^{1-\alpha}}{c_3^2 d} + 2 \\
    \leq&\; \frac{4 (c_4^2/c_3^2) n^{1-\alpha} + 2d}{d} \\
    \leq&\; \frac{6 (c_4^2/c_3^2) n^{1-\alpha}}{d}.
\end{align*}
Following the proof in Section 24.2 of~\cite{lattimore2020bandit}, with Pinsker's inequality and Exercise 14.4 in~\cite{lattimore2020bandit}, we have
\begin{align*}
    &\; \left\lvert \mathbb{E} \left[ \sum_{t=1}^{n'} \left( \frac{\sin \eta}{\sqrt{d-1}} - \tilde{x}_{m,t,\ell} \right)^2 \Bigg| \vz', \mathcal{E} \right]  - \mathbb{E} \left[ \sum_{t=1}^{n'} \left( \frac{\sin \eta}{\sqrt{d-1}} - \tilde{x}_{m,t,\ell} \right)^2 \Bigg| \vz', \mathcal{E} \right] \right\rvert \\
    \leq&\; \left( \frac{8 (c_4^2/c_3^2) n^{1-\alpha}}{d} \right) \sqrt{\frac{1}{2} \mathrm{D}\left( \mathbb{P}_{\vz} \Vert \mathbb{P}_{\vz'} \right)},
\end{align*}
where $\mathbb{P}_{\vz}$ is the joint distribution of $\{(\vx_{i,t}, y_{i,t})\}_{i=1:m, t=1:n}$ given that $\vtheta_m = \vtheta^{(\vz)}$ and event $\mathcal{E}$ holds.
With Section 24.2 and Exercise 15.7 in~\cite{lattimore2020bandit}, we know that
\begin{align*}
    \mathrm{D}\left( \mathbb{P}_{\vz} \Vert \mathbb{P}_{\vz'} \right)
    =&\; \frac{1}{2} \mathbb{E}_{\vz} \left[ \sum_{t=1}^{n'} \left\langle \tilde{\vx}_{m,t}, \tilde{\vtheta}^{(\vz)} - \tilde{\vtheta}^{(\vz')} \right\rangle^2 \right] \\
    =&\; \frac{1}{2} \mathbb{E}_{\vz} \left[ \sum_{t=1}^{n'} \left( x_{m,t,\ell} \cdot \left( \theta^{(\vz)}_\ell - \theta^{(\vz')}_\ell \right) \right)^2 \right] \\
    \leq&\; \frac{1}{2} \left(2  c_4 \sqrt{d} n^{-\frac{1}{2} + \frac{\alpha}{2}} \right)^2 \mathbb{E}_{\vz} \left[ \sum_{t=1}^{n'} x_{m,t,\ell}^2 \right] \\
    \leq&\; \frac{1}{2} \left( 2 c_4 \sqrt{d} n^{-\frac{1}{2} + \frac{\alpha}{2}} \right)^2 \left( \frac{n \sin^2 \eta}{d-1} + 1 \right) \\
    =&\; 2 c_4^2 d n^{- 1 + \alpha} \left( \frac{n}{d-1} \frac{c_4^2}{c_3^2} \frac{d-1}{d n^\alpha} + 1 \right) \\
    \leq&\; \frac{2 c_4^4}{c_3^2} + 2 c_4^2 d n^{-1+\alpha} \\
    \leq&\; \frac{2 c_4^4}{c_3^2} + \frac{2 c_4^4}{c_3^2} \\
    \leq&\; \frac{4 c_4^4}{c_3^2}.
\end{align*}
With the bound on the KL-divergence term, we have
\begin{align*}
    &\; \left\lvert \mathbb{E}_{\vz} \left[ \sum_{t=1}^{n'} \left( \frac{\sin \eta}{\sqrt{d-1}} - \tilde{x}_{m,t,\ell} \right)^2 \right]  - \mathbb{E}_{\vz'} \left[ \sum_{t=1}^{n'} \left( \frac{\sin \eta}{\sqrt{d-1}} - \tilde{x}_{m,t,\ell} \right)^2 \right] \right\rvert \\
    \leq&\; \left( \frac{6 (c_4^2/c_3^2) n^{1-\alpha}}{d} \right) \sqrt{\frac{4 c_4^4}{c_3^2}} \\
    =&\; \frac{12 (c_4^4/c_3^3) n^{1-\alpha}}{d}.
\end{align*}
Thus, we have
\begin{align*}
    \mathbb{E}_{\vz} \left[ \sum_{t=1}^{n'} \left( \frac{\sin \eta}{\sqrt{d-1}} - \tilde{x}_{m,t,\ell} \right)^2 \right]
    \geq&\; \mathbb{E}_{\vz'} \left[ \sum_{t=1}^{n'} \left( \frac{\sin \eta}{\sqrt{d-1}} - \tilde{x}_{m,t,\ell} \right)^2 \right] - \frac{12 (c_4^4/c_3^3) n^{1-\alpha}}{d}.
\end{align*}
Adding an extra term on the both side gives us
\begin{align*}
    &\; \mathbb{E}_{\vz} \left[ \sum_{t=1}^{n'} \left( \frac{\sin \eta}{\sqrt{d-1}} - \tilde{x}_{m,t,\ell} \right)^2 \right] + \mathbb{E}_{\vz'} \left[ \sum_{t=1}^{n'} \left( \frac{\sin \eta}{\sqrt{d-1}} + \tilde{x}_{m,t,\ell} \right)^2 \right] \\
    \geq&\; \mathbb{E}_{\vz'} \left[ \sum_{t=1}^{n'} \left( \frac{\sin \eta}{\sqrt{d-1}} - \tilde{x}_{m,t,\ell} \right)^2 \right] + \mathbb{E}_{\vz'} \left[ \sum_{t=1}^{n'} \left( \frac{\sin \eta}{\sqrt{d-1}} + \tilde{x}_{m,t,\ell} \right)^2 \right] - \frac{12 n^{1-\alpha}}{d} \\
    =&\; 2 \sum_{t=1}^{n'} \frac{\sin^2 \eta}{d-1} + 2 \mathbb{E}_{\vz'} \left[ \sum_{t-1}^{n'} \tilde{x}_{m,t,\ell} \right] - \frac{12 n^{1-\alpha}}{d} \\
    \geq&\; \frac{2 n \sin^2 \eta}{d-1} - \frac{12 n^{1-\alpha}}{d} \\
    =&\; \frac{2n}{d-1} \frac{c_4^2}{c_3^2} \frac{d-1}{d n^\alpha} - \frac{12 n^{1-\alpha}}{d} \\
    =&\; \frac{2 c_4^2 n^{1-\alpha}}{c_3^2 d} \left( 1 - \frac{6 c_4^2}{c_3} \right).
\end{align*}
Since we have
\begin{align*}
    c_4^2
    = \frac{c_1}{6 \sqrt{2}}
    \leq \frac{(\sqrt{2} c_3/2)}{6 \sqrt{2}}
    \leq \frac{c_3}{12},
\end{align*}
we have
\begin{align*}
    &\; \mathbb{E}_{\vz} \left[ \sum_{t=1}^{n'} \left( \frac{\sin \eta}{\sqrt{d-1}} - \tilde{x}_{m,t,\ell} \right)^2 \right] + \mathbb{E}_{\vz'} \left[ \sum_{t=1}^{n'} \left( \frac{\sin \eta}{\sqrt{d-1}} + \tilde{x}_{m,t,\ell} \right)^2 \right] \\
    \geq&\; \frac{c_4^2 n^{1-\alpha}}{c_3^2 d} \\
    \geq&\; \min \left\{ \frac{c_1^2}{2}, \frac{c_1}{6\sqrt{2}} \right\} \cdot \left( \frac{\sqrt{6}}{2} c_1 \right)^{-2} \frac{n^{1-\alpha}}{d} \\
    =&\; \frac{c_6 n^{1-\alpha}}{d}
\end{align*}
where $c_6 = \min \left\{ \frac{1}{3}, \frac{1}{9\sqrt{2} c_1} \right\}$. Similarly, we also have
\begin{align*}
    &\; \mathbb{E}_{\vz} \left[ \sum_{t=1}^{n'} \left( \frac{\sin \eta}{\sqrt{d-1}} + \tilde{x}_{m,t,\ell} \right)^2 \right] + \mathbb{E}_{\vz'} \left[ \sum_{t=1}^{n'} \left( \frac{\sin \eta}{\sqrt{d-1}} - \tilde{x}_{m,t,\ell} \right)^2 \right]
    \geq \frac{c_6 n^{1-\alpha}}{d}.
\end{align*}
These two give us
\begin{align*}
    &\; \mathbb{E}_{\vz} \left[ \sum_{t=1}^{n'} \left( \frac{\sin \eta}{\sqrt{d-1}} - \tilde{x}_{m,t,\ell} \cdot z_\ell \right)^2 \right] + \mathbb{E}_{\vz'} \left[ \sum_{t=1}^{n'} \left( \frac{\sin \eta}{\sqrt{d-1}} - \tilde{x}_{m,t,\ell} \cdot z'_\ell \right)^2 \right]
    \geq \frac{c_6 n^{1-\alpha}}{d}.
\end{align*}
Plug this to~\eqref{eq:large_sigma_reg_1} and~\eqref{eq:large_sigma_reg_2}, we have
\begin{align*}
    &\; \mathbb{E} \left[ \sum_{t=1}^n \lVert \vtheta_m \rVert_2 - \langle \vx_{i,t}, \vtheta_m \rangle \right] \\
    =&\; \frac{1}{\lvert \mathcal{Z} \rvert} \sum_{\vz \in \mathcal{Z}} \mathbb{E} \left[ \sum_{t=1}^n \lVert \vtheta^{(\vz)} \rVert - \langle \tilde{\vx}_{i,t}, \tilde{\vtheta}^{(\vz)} \rangle \right] \\
    \geq&\; \frac{1}{\lvert \mathcal{Z} \rvert} \sum_{\vz \in \mathcal{Z}} c_5 d n^{-\frac{1}{2} + \alpha} \sum_{\ell=2}^d \mathbb{E} \left[ \sum_{t=1}^{n'} \left( \frac{\sin \eta}{\sqrt{d-1}} - \tilde{x}_{m,t,\ell} \cdot z_\ell \right)^2 \bigg| \mathcal{E} \right] \\
    =&\; c_5 d n^{-\frac{1}{2} + \alpha} \sum_{\ell=2}^d \frac{1}{\lvert \mathcal{Z} \rvert} \sum_{\vz \in \mathcal{Z}} \mathbb{E} \left[ \sum_{t=1}^{n'} \left( \frac{\sin \eta}{\sqrt{d-1}} - \tilde{x}_{m,t,\ell} \cdot z_\ell \right)^2 \bigg| \mathcal{E} \right] \\
    =&\; c_5 d n^{-\frac{1}{2} + \alpha} \sum_{\ell=2}^d \frac{1}{2 \lvert \mathcal{Z} \rvert} \sum_{\vz \in \mathcal{Z}} \mathbb{E} \left[ \sum_{t=1}^{n'} \left( \frac{\sin \eta}{\sqrt{d-1}} - \tilde{x}_{m,t,\ell} \cdot z_\ell \right)^2 \bigg| \mathcal{E} \right] + \mathbb{E} \left[ \sum_{t=1}^{n'} \left( \frac{\sin \eta}{\sqrt{d-1}} - \tilde{x}_{m,t,\ell} \cdot z'_\ell \right)^2 \bigg| \mathcal{E} \right]  \\
    \geq&\; c_5 d n^{-\frac{1}{2} + \alpha} \sum_{\ell=2}^d \frac{1}{2 \lvert \mathcal{Z} \rvert} \sum_{\vz \in \mathcal{Z}} \frac{c_6 n^{1-\alpha}}{d} \\
    =&\; \frac{c_5 c_6}{2} (d-1) \sqrt{n} \\
    \geq&\; c_7 d \sqrt{n},
\end{align*}
where $c_7 = (c_5 c_6)/ 4$.

The assumption we use is that
\begin{align*}
    d \leq n^{1-\alpha} \cdot \min \left\{ \frac{1}{3}, \frac{1}{9 \sqrt{2} c_1} \right\}
\end{align*}

\subsubsection{Middle regime}
For the second regime in Theorem~\ref{thm:lower_bound}, the proof is very similar to the proof for large $\sigma$. We just need an extra $m^{-\gamma}$ term and most parts of the proof stay the same. In the middle regime, instead of~\eqref{eq:large_sigma_c_1}, we now have
\begin{align*}
    \sigma = c_1 d^{\frac{1}{2}} m^{-\gamma}n^{-\frac{1}{2}}
\end{align*}
for $\gamma \in \left[0, \frac{1}{2} \right]$ and some constant $c_1 > 0$. Then, we have
\begin{gather*}
    \eta = \sin^{-1} \left( \frac{c_4 \sqrt{d(d-1)} m^{-\gamma} n^{-\frac{1}{2}}}{\lVert \vtheta^{(0)} \rVert} \right)
    = \sin^{-1} \left( \frac{c_4}{c_3} \sqrt{\frac{d-1}{d}} \right) \in \left(0, \frac{\pi}{2} \right).
\end{gather*}
compared to~\eqref{eq:large_sigma_eta}. 
Then, in~\eqref{eq:large_sigma_reg_2}, we have the extra $m^{-\gamma}$ term in the front,
\begin{align*}
    &\; \mathbb{E} \left[ \sum_{t=1}^n \lVert \vtheta^{(\vz)} \rVert - \langle \tilde{\vx}_{i,t}, \tilde{\vtheta}^{(\vz)} \rangle \bigg| \vz \right] \\
    \geq&\; \mathbb{P}( \mathcal{E}) \, \mathbb{E} \left[ \sum_{t=1}^n \sum_{\ell=2}^d \frac{\lVert \vtheta^{(\vz)} \rVert \sin^2 \eta}{2} \left( \frac{1}{\sqrt{d-1}} - \frac{\tilde{x}_{m,t,\ell} \cdot z_\ell}{\sin \eta} \right)^2 \bigg| \vz, \mathcal{E} \right] \\
    \geq&\; c_2 \mathbb{E} \left[ \sum_{\ell=2}^d \frac{\lVert \vtheta^{(\vz)} \rVert \sin^2 \eta}{2} \sum_{t=1}^{n'} \left( \frac{1}{\sqrt{d-1}} - \frac{\tilde{x}_{m,t,\ell} \cdot z_\ell}{\sin \eta} \right)^2 \bigg| \vz, \mathcal{E} \right] \\
    =&\; c_2 \mathbb{E} \left[ \sum_{\ell=2}^d \frac{c_4^2 d(d-1) m^{-2\gamma} n^{-1+\alpha}}{2 \lVert \vtheta^{(0)} \rVert} \sum_{t=1}^{n'} \left( \frac{1}{\sqrt{d-1}} - \frac{\tilde{x}_{m,t,\ell} \cdot z_\ell}{\sin \eta} \right)^2 \bigg| \vz, \mathcal{E} \right] \\
    =&\; c_2 \mathbb{E} \left[ \sum_{\ell=2}^d \frac{c_4^2 d(d-1) m^{-2\gamma} n^{-1+\alpha}}{2 c_3 d m^{-\gamma} n^{-\frac{1}{2}+\alpha}} \sum_{t=1}^{n'} \left( \frac{1}{\sqrt{d-1}} - \frac{\tilde{x}_{m,t,\ell} \cdot z_\ell}{\sin \eta} \right)^2 \bigg| \vz, \mathcal{E} \right] \\
    =&\; \frac{c_2 (d-1) }{2 m^\gamma \sqrt{n}} \sum_{\ell=2}^d \mathbb{E} \left[ \frac{c_4^2}{c_3} \sum_{t=1}^{n'} \left( \frac{1}{\sqrt{d-1}} - \frac{\tilde{x}_{m,t,\ell} \cdot z_\ell}{\sin \eta} \right)^2 \bigg| \vz, \mathcal{E} \right] \\
    =&\; \frac{c_2 (d-1) }{2 m^\gamma \sqrt{n}} \sum_{\ell=2}^d \mathbb{E} \left[ \frac{c_4^2}{c_3 \sin^2 \eta} \sum_{t=1}^{n'} \left( \frac{\sin \eta}{\sqrt{d-1}} - \tilde{x}_{m,t,\ell} \cdot z_\ell \right)^2 \bigg| \vz, \mathcal{E} \right] \\
    =&\; \frac{c_2 (d-1) }{2 m^\gamma \sqrt{n}} \sum_{\ell=2}^d \mathbb{E} \left[ \frac{c_3 d}{d-1} \sum_{t=1}^{n'} \left( \frac{\sin \eta}{\sqrt{d-1}} - \tilde{x}_{m,t,\ell} \cdot z_\ell \right)^2 \bigg| \vz, \mathcal{E} \right] \\
    \geq&\; \frac{c_2 d n^{-\frac{1}{2}}}{2 m^\gamma} \sum_{\ell=2}^d \mathbb{E} \left[ \frac{\sqrt{2} c_1}{2} \sum_{t=1}^{n'} \left( \frac{\sin \eta}{\sqrt{d-1}} - \tilde{x}_{m,t,\ell} \cdot z_\ell \right)^2 \bigg| \vz, \mathcal{E} \right] \\
    =&\; c_5 d m^{-\gamma} n^{-\frac{1}{2}} \sum_{\ell=2}^d \mathbb{E} \left[ \sum_{t=1}^{n'} \left( \frac{\sin \eta}{\sqrt{d-1}} - \tilde{x}_{m,t,\ell} \cdot z_\ell \right)^2 \bigg| \vz, \mathcal{E} \right].
\end{align*}
The rest of the proof are the same. Due to the $m^{-\gamma}$ term in the front, we have
\begin{align*}
    \mathbb{E} \left[ \sum_{i=1}^m \sum_{t=1}^n \lVert \vtheta_m \rVert_2 - \langle \vx_{i,t}, \vtheta_m \rangle \right]
    = \Omega \left( d m^{1-\gamma} \sqrt{n} \right).
\end{align*}

\subsection{Regret lower bound for the first regime}
Equivalently we have
\begin{align*}
    \sigma = o \left( \sqrt{\frac{d}{mn}} \right).
\end{align*}
Let $\Pi$ be the set of all policies
\begin{align*}
    \Pi = \left\{ \pi : \pi = (\pi_{i,t})_{i=1:m, t=1:n}, \pi_{i,t}: \left\{ (\vx_{i,\tau}, y_{i,\tau}) \right\}_{i=1:m, \tau=1:t-1} \mapsto \vx_t \right\}
\end{align*}
and
\begin{align*}
    \vx_i^* = \arg\max_{\vx \in \mathcal{A}} \langle \vx, \vtheta_i \rangle, \\
    \vx^* = \arg\max_{\vx \in \mathcal{A}} \langle \vx, \vmu \rangle.
\end{align*}
Then,
\begin{align}
    R_{m,n}(\vmu)
    =&\; \min_{\pi \in \Pi} \mathbb{E} \left[ \sum_{i=1}^m \sum_{t=1}^n \langle \vx_i^* - \vx_{i,t}, \vtheta_i \rangle \right] \nonumber \\
    \geq&\; \min_{\pi \in \Pi} \mathbb{E} \left[ \sum_{i=1}^m \sum_{t=1}^n \langle \vx^* - \vx_{i,t}, \vtheta_i \rangle \right] \nonumber \\
    =&\; \min_{\pi \in \Pi} \mathbb{E} \left[ \sum_{i=1}^m \sum_{t=1}^n \langle \vx^* - \vx_{i,t}, \vmu \rangle + \sum_{i=1}^m \sum_{t=1}^n \langle \vx^* - \vx_{i,t}, \vtheta_i - \vmu \rangle \right] \nonumber \\
    \geq&\; \min_{\pi \in \Pi} \mathbb{E} \left[ \sum_{i=1}^m \sum_{t=1}^n \langle \vx^* - \vx_{i,t}, \vmu \rangle \right] + \min_{\pi \in \Pi} \mathbb{E} \left[ \sum_{i=1}^m \sum_{t=1}^n \langle \vx^* - \vx_{i,t}, \vtheta_i - \vmu \rangle \right] \nonumber \\
    \geq&\; \min_{\pi \in \Pi} \mathbb{E} \left[ \sum_{i=1}^m \sum_{t=1}^n \langle \vx^* - \vx_{i,t}, \vmu \rangle \right] - \left\lvert \min_{\pi \in \Pi} \mathbb{E} \left[ \sum_{i=1}^m \sum_{t=1}^n \langle \vx^* - \vx_{i,t}, \vtheta_i - \vmu \rangle \right] \right\rvert. \label{eq:small_sigma_regret}
\end{align}
Since $\sigma = o \left( \sqrt{d/mn} \right)$, we have
\begin{align}
    \left\lvert \mathbb{E} \left[ \sum_{i=1}^m \sum_{t=1}^n \langle \vx^* - \vx_{i,t}, \vtheta_i - \vmu \rangle \right] \right\rvert
    \leq&\; \mathbb{E} \left[ \sum_{i=1}^m \sum_{t=1}^n \lVert \vx^* - \vx_{i,t} \rVert_2 \lVert \vtheta_i - \vmu \rVert_2 \right] \nonumber \\
    \leq&\; 2 m n \mathbb{E} \left[ \lVert \vtheta_i - \vmu \rVert_2 \right] \nonumber \\
    \leq&\; 2 m n \sqrt{ \mathbb{E} \left[ \lVert \vtheta_i - \vmu \rVert_2^2 \right] } \nonumber \\
    =&\; 2mn \sqrt{d\sigma^2} \nonumber \\
    =&\; o \left(d\sqrt{mn} \right). \label{eq:small_sigma_shift}
\end{align}
Now, let $\Pi'$ be the set of policies that knows $\vu_1, \dots, \vu_m$, that is,
\begin{align*}
    \Pi = \left\{ \pi : \pi = (\pi_{i,t})_{i=1:m, t=1:n}, \pi_{i,t}:( \{ \vu_i \}_{i=1:m}, \left\{ (\vx_{i,\tau}, y_{i,\tau}) \right\}_{i=1:m, \tau=1:t-1} ) \mapsto \vx_t \right\}.
\end{align*}
Since $\Pi \subset \Pi'$, we have
\begin{align*}
    R_{m,n}(\vmu)
    \geq&\; \min_{\pi \in \Pi'} \mathbb{E} \left[ \sum_{i=1}^m \sum_{t=1}^n \langle \vx^* - \vx_{i,t}, \vmu \rangle \right] - 2mn \sqrt{d\sigma^2}.
\end{align*}
Since $\pi \in \Pi'$ knows $\vu_1, \dots, \vu_m$, equivalently it knows
\begin{align*}
    y'_{i,t}
    = \langle \vx_{i,t}, \vmu \rangle + \epsilon_{i,t}
    = \langle \vx_{i,t}, \vtheta_i \rangle + \langle \vx_{i,t}, \vmu - \vtheta_i \rangle + \epsilon_{i,t}
    = y_{i,t} - \langle \vx_{i,t}, \vu_i \rangle.
\end{align*}
Thus, we must have
\begin{align*}
    \min_{\pi \in \Pi'} \mathbb{E} \left[ \sum_{i=1}^m \sum_{t=1}^n \langle \vx^* - \vx_{i,t}, \vmu \rangle \right]
\end{align*}
obeying the lower bound for the conventional bandits problem where there are $mn$ samples and $\vmu$ is the true parameter. From Section 24.2 in~\cite{lattimore2020bandit}, we have that
\begin{align}
    \min_{\pi \in \Pi'} \mathbb{E} \left[ \sum_{i=1}^m \sum_{t=1}^n \langle \vx^* - \vx_{i,t}, \vmu \rangle \right]
    \geq \frac{1}{32} d \sqrt{mn} \label{eq:small_sigma_classic}
\end{align}
for some carefully chosen $\vmu \in \mathcal{B}_2^d$. Plug~\eqref{eq:small_sigma_classic} and~\eqref{eq:small_sigma_shift} back to~\eqref{eq:small_sigma_regret} and we have
\begin{align*}
    R_{m,n}(\vmu) = \Omega \left( d \sqrt{mn} \right).
\end{align*}

\begin{remark}[Assumptions made in Theorem~\ref{thm:lower_bound}]
    \label{rmk:lower_assumptions}
    In Theorem~\ref{thm:lower_bound}, we made minor assumptions on the relation between the total number of rounds, $n$, and the dimension, $d$. This naturally follows the assumptions on $\vmu$ and $\sigma$, which are discussed in remark~\ref{rmk:assump}. In the first regime, we assume that $d \leq mn$. This corresponds to the classic minimax bound result~\cite{lattimore2020bandit} (Theorem~24.4), where we assume that the dimension is smaller than the number of samples. In the second regimes, we the assumptions follow the identical order as in the classic result that $d \leq n$. In the third regime, we assume that $n = \Omega(d^{-(1-\gamma)})$. This naturally follows the assumption we made on $\vmu$ and $\sigma$ being at most constant. In this case, we must have $d = O(n^{1-\gamma})$ in order to get $\sigma = O(1)$. The constants is the assumption are adjusted to deal with the technical proofs.
\end{remark}

\section{Algorithm: Collaborative Personalized Phased Elimination}
\label{sec:algorithm_analysis}

In this section, we provide a regret analysis for Algorithm~\ref{alg:pb}. The algorithm is composed of two stages: the collaborative learning stage and the personalized learning stage. We present Algorithm~\ref{alg:pb} and the two composing blocks: the Collaborative-Phased Elimination block~\ref{alg:pb_collab} and the Phased Elimination block~\ref{alg:pb_pers}.

\begin{algorithm}[b]
    \caption{Collaborative Personalized Phased Elimination (CP-PE)}
    \label{alg:pb}
    \begin{algorithmic}[1]
        \STATE \textbf{Input}: action set $\mathcal{A}$, dimension $d$, number of users $m$, number of rounds $n$, level of heterogeneity $\sigma$, $\delta \in (0,1)$
        \STATE $\mathcal{A}_{i,1} \leftarrow \mathcal{A} \quad \forall i \in [m], \, \ell \leftarrow 0$, calculate $h_{\sigma,\delta}$ from~\eqref{eq:h} 
        \WHILE{total rounds for each user $\leq n$}
            \STATE $\varepsilon \leftarrow 2^{-\ell}$
            \IF{$h_{\sigma, \delta} \leq \frac{1}{2} \varepsilon_\ell$} 
                \STATE $\{ \mathcal{A}_{i,\ell+1} \}_{i=1:m} \leftarrow \text{Collaborative-Phased Elimination}(\mathcal{A}_{1,\ell}, \varepsilon_\ell)$
            \ELSE
                \FOR{$i \in [m]$}
                    \STATE $\mathcal{A}_{i,\ell+1} \leftarrow \text{Phased Elimination}(\mathcal{A}_{i,\ell}, \varepsilon_\ell)$
                \ENDFOR
            \ENDIF
        \ENDWHILE
    \end{algorithmic}
\end{algorithm}

\begin{algorithm}
    \caption{Collaborative-Phased Elimination($\mathcal{A}_{1,\ell}$, $\varepsilon_\ell$)}
    \label{alg:pb_collab}
    \begin{algorithmic}[1]
        \STATE Find $G$-optimal design $\vpi_\ell \in \mathcal{P}(\mathcal{A}_{1,\ell})$ with $\mathrm{Supp}(\vpi_\ell) \leq d(d+1) / 2$ 
        \FOR{$i \in [m]$}
            \STATE User $i$ selects each action $\vx \in \mathcal{A}_{i,\ell}$ for $n_{i,\ell}(\vx)$ times as defined in~\eqref{eq:alg_n_col}.
            \STATE Record the actions $\mat{X}_i \in \mathbb{R}^{\bar{n}_{i,\ell} \times d}$ and rewards $\vy_i \in \mathbb{R}^{\bar{n}_{i,\ell}}$.
        \ENDFOR
        \STATE Calculate $\hat{\vmu}_\ell = \frac{1}{m} \sum_{i=1}^m ({\mat{X}_i}^\top {\mat{X}_i})^{\dagger} {\mat{X}_i}^\top \vy_i$ and update
        \begin{align*}
            \mathcal{A}_{i,\ell+1} \leftarrow \left\{ x \in \mathcal{A}_{\ell} : \max_{\vx' \in \mathcal{A}_{1,\ell}} \langle \hat{\vmu}_\ell, \vx' - \vx \rangle \leq 2 \varepsilon_\ell \right\} \quad \forall i \in [m].
        \end{align*}
        \RETURN $\{ \mathcal{A}_{i,\ell+1} \}_{i=1:m}$
    \end{algorithmic}
\end{algorithm}

\begin{algorithm}
    \caption{Phased Elimination($\mathcal{A}_{i,\ell}$, $\varepsilon_\ell$)}
    \label{alg:pb_pers}
    \begin{algorithmic}[1]
        \STATE Find $G$-optimal design $\vpi_{i,\ell} \in \mathcal{P}(\mathcal{A}_{i,\ell})$ with $\mathrm{Supp}(\vpi_{i,\ell}) \leq d(d+1) / 2$
        \STATE Select each action $\vx \in \mathcal{A}_{i,\ell}$ exactly $n_{i,\ell}(x)$ times as defined in~\eqref{eq:alg_n_loc}.
        \STATE Record the actions $\mat{X}_i \in \mathbb{R}^{\bar{n}_{i,\ell} \times d}$ and rewards $\vy_i \in \mathbb{R}^{\bar{n}_{i,\ell}}$.
        \STATE Calculate $\hat{\vtheta}_{i,\ell} \leftarrow (\mat{X}_i^\top \mat{X}_i)^{\dagger} \mat{X}_i^\top \vy_i$ and update
        \begin{align*}
            \mathcal{A}_{i,\ell+1} \leftarrow \left\{ \vx \in \mathcal{A}_{i,\ell} : \max_{\vx' \in \mathcal{A}_{i,\ell}} \langle \hat{\vtheta}_{i,\ell}, \vx' - \vx \rangle \leq 2 \varepsilon_\ell \right\}
        \end{align*}
        \RETURN $\mathcal{A}_{i,\ell+1}$
    \end{algorithmic}
\end{algorithm}


In the collaborative learning stage (line~5 and~6 in Algorithm~\ref{alg:pb}), all users start with the same action set $\mathcal{A}$ and perform phased elimination together. In each phase $\ell$, we solve the G-optimal design for the current action set $\mathcal{A}_{1,\ell}$ and let $\vpi_{\ell}$ be the design solution, which is a probability distribution on the actions in $\mathcal{A}_{1,\ell}$. Let $n_{i,\ell}(\vx)$ denote the number of times for the action $\vx$ to be selected by user $i$ in the phase and choose
\begin{align}
    n_{i,\ell}(\vx) = \left\lceil \frac{\vpi_\ell(\vx) g(\vpi_\ell)}{m \varepsilon_\ell^2} \log \frac{k\ell(\ell+1)}{2\delta} \right\rceil \label{eq:alg_n_col}
\end{align}
for all $\vx \in \mathcal{A}_{i,\ell}$ and $i \in [m]$. Let $\bar{n}_{i,\ell} = \sum_{\vx} n_{i.\ell}(\vx)$ denote the total number of pulls done by user $i$ in phase $\ell$. Notice that the factor $m$ in the denominator is the benefit of collaboration that each user only needs to afford $1/m$ of the pulls it would need if it were performing phased elimination on its own. In Lemma~\ref{lem:upp_1}, we see that by selecting this amount of actions, the agents collaboratively learn a good estimation on $\vmu$ with the target error $\varepsilon_\ell$. And as we decrease $\varepsilon_\ell$ in each phase, we gradually form a better estimation on $\vmu$.

However, the ultimate goal for each user $i$ is to find the best action with respect to $\vtheta_i$ instead of $\vmu$. This motivates our second stage, the personalized learning stage. In this stage (line~7 to~9 in Algorithm~\ref{alg:pb}), each user continues from the action set it gets from the end of the collaboration stage and performs phased elimination by its own. In each phase $\ell$, we now have
\begin{align}
    n_{i,\ell}(\vx) = \left\lceil \frac{\vpi_\ell(\vx) g(\vpi_\ell)}{\varepsilon_\ell^2} \log \frac{km\ell(\ell+1)}{2\delta} \right\rceil \label{eq:alg_n_loc}
\end{align}
for all $\vx \in \mathcal{A}_{i,\ell}$. Also in Lemma~\ref{lem:upp_1}, we can see that this gives each user $i$ an estimation on $\vtheta_i$ with an error $\varepsilon_\ell$. Thus, each user can now refine its own result and find its optimal action. 

The timing to switch from the collaborative stage to the personalized learning stage depends on the amount of heterogeneity. In the data-scarce regime, staying in the collaborative stage increases the effective number of samples for each user and benefits the learning. Also, when the heterogeneity is low, it reduces the workload for each user while providing a good estimation for each $\vtheta_i$, since $\vtheta_i$'s and $\vmu$ in this regime. In the data-rich regime or with high heterogeneity, the users need to switch to the personalized learning stage earlier since collaboration barely helps. The timing to switch stages is explicitly determined by the term
\begin{align}
    h_{\sigma, \delta} = \sqrt{ 2 \left( \max_{\vx \in \mathcal{A}} \lVert \vx \rVert_{\mat{C}}^2 \right) \log \frac{4k}{\delta} } + \sqrt{ \frac{2}{m} \left( \max_{\vx \in \mathcal{A}} \lVert \vx \rVert_{\mat{C}}^2 \right) \log \frac{4k}{\delta} } \label{eq:h}
\end{align}
for any given $\delta \in (0,1)$. We can see that the term $h_{\sigma, \delta}$ depends on the covariance matrix $\mat{C}$, which is the covariance matrix in the population distribution reflecting the heterogeneity of the users. We can see from line~3 in Algorithm~\ref{alg:pb} that the users stay in the collaborative stage if the target error $\varepsilon_\ell$ is larger than $2h_{\sigma,\delta}$, and switch to the personalized learning stage once the target error goes below $2h_{\sigma,\delta}$. This switching is natural here since estimating $\vmu$ is good enough for estimating $\vtheta_i$ as long as the target error $\varepsilon_\ell$ surpassed the heterogeneity. On the other hand, when the target error is becoming sufficiently small, estimating $\vmu$ will never give us a good estimation on $\vtheta_i$, and thus we need to switch to personalized learning stage instead.

For the regret analysis on Algorithm~\ref{alg:pb}, we start from the case with finite action sets in Theorem~\ref{thm:alg_finite}, and extend the results for the unit-ball action set presented in Theorem~\ref{thm:alg_infinite}.

We extend the result on finite action sets to the unit-ball action set.

\begin{theorem}
    \label{thm:alg_infinite}
    Consider the unit ball action set $\mathcal{A} = \left\{ \vx: \lVert \vx \rVert_2 \leq 1 \right\}$, and assume that $\lVert \vmu \rVert_2 \leq 1$, $\sigma \leq 1$, and $n \geq d^2$. For any $\delta \in (0,1)$, by running Algorithm~\ref{alg:pb}, we have the following results:
    \begin{itemize}
        \item When $n = o \left( d / m \sigma^2 \right)$, we have $\hat{R}_{m,n}(\vmu) = \tilde{O} \left( d\sqrt{mn} + md^2 \right)$.
        
        \item When $n = \Theta \left( d / m^{2\gamma} \sigma^2 \right)$ with $\gamma \in [0, \frac{1}{2}]$, we have $\hat{R}_{m,n}(\vmu) = \tilde{O} \left( d m^{1-\gamma} \sqrt{n} + md^2 \right)$.
        
        \item When $n = \omega \left( d / \sigma^2 \right)$, we have $\hat{R}_{m,n}(\vmu) = \tilde{O} \left( dm \sqrt{n} + md^2 \right)$.
    \end{itemize}
    with probability at least $1-\delta$.
\end{theorem}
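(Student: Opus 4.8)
The plan is to reduce the continuous unit-ball problem to the finite-action case already settled in Theorem~\ref{thm:alg_finite} via a covering argument, and then show that the resulting discretization error is of lower order. Concretely, I would fix a resolution $\varepsilon_0 = 1/(mn)$ and let $\mathcal{C} \subset \mathcal{B}_2^d$ be a minimal $\varepsilon_0$-net of the unit ball, whose cardinality satisfies $k := |\mathcal{C}| \le (3/\varepsilon_0)^d = (3mn)^d$, so that $\log k = O(d\log(mn))$ contributes only a polylogarithmic factor. Running Algorithm~\ref{alg:pb} with the input action set $\mathcal{A}$ replaced by $\mathcal{C}$ (for every user, including in the $G$-optimal design computations), Theorem~\ref{thm:alg_finite} applies with this $k$; since its regret bounds depend on $k$ only through $\log k$, which is absorbed into $\tilde{O}(\cdot)$, the three regime-dependent main terms $\tilde{O}(d\sqrt{mn})$, $\tilde{O}(dm^{1-\gamma}\sqrt{n})$, $\tilde{O}(dm\sqrt{n})$ together with the additive $\tilde{O}(md^2)$ term carry over, now measured against the best action \emph{within} $\mathcal{C}$.

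It then remains to pay for the gap between competing against the best net point and the best action in the full ball. I would split the regret as $\hat{R}_{m,n}(\vmu) = \sum_{i}\sum_{t}\big(\langle \vx_i^\star,\vtheta_i\rangle - \max_{\vx\in\mathcal{C}}\langle \vx,\vtheta_i\rangle\big) + \sum_{i}\sum_{t}\big(\max_{\vx\in\mathcal{C}}\langle \vx,\vtheta_i\rangle - \langle \vx_{i,t},\vtheta_i\rangle\big)$, where the second sum is exactly the quantity bounded by Theorem~\ref{thm:alg_finite} applied to $\mathcal{C}$. For the first (discretization) sum, the continuous optimizer is $\vx_i^\star = \vtheta_i/\lVert\vtheta_i\rVert_2$, and the net point $\tilde{\vx}_i$ closest to $\vx_i^\star$ loses at most $\langle \vx_i^\star - \tilde{\vx}_i,\vtheta_i\rangle \le \varepsilon_0\lVert\vtheta_i\rVert_2$ of reward per round, so the total discretization regret is at most $mn\,\varepsilon_0\max_{i\in[m]}\lVert\vtheta_i\rVert_2 = \max_{i\in[m]}\lVert\vtheta_i\rVert_2$ by the choice of $\varepsilon_0$.

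To control this last quantity I would invoke Gaussian concentration: since $\vtheta_i - \vmu \sim \mathcal{N}(0,\sigma^2\mat{I})$, the variable $\lVert\vtheta_i-\vmu\rVert_2^2/\sigma^2$ is $\chi^2_d$, so a standard chi-square tail bound with a union bound over the $m$ users gives $\max_i\lVert\vtheta_i-\vmu\rVert_2 = O\big(\sigma(\sqrt{d}+\sqrt{\log(m/\delta)})\big)$ with probability at least $1-\delta/2$. Combined with $\lVert\vmu\rVert_2\le 1$ and $\sigma\le 1$, this yields $\max_i\lVert\vtheta_i\rVert_2 = \tilde{O}(\sqrt{d})$, so the discretization regret is $\tilde{O}(\sqrt{d})$, which is dominated by the additive $\tilde{O}(md^2)$ term in every regime (and, under $n\ge d^2$, by the leading terms as well). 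Allocating $\delta/2$ of the failure probability to Theorem~\ref{thm:alg_finite} and $\delta/2$ to this concentration event, a union bound gives the claimed bounds with probability at least $1-\delta$.

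I expect the main obstacle to be the bookkeeping around the net rather than any single hard estimate: one must verify that replacing $\mathcal{A}$ by $\mathcal{C}$ leaves the two-stage machinery intact — in particular that the switching threshold $h_{\sigma,\delta}$ in~\eqref{eq:h} is essentially unchanged (since $\max_{\vx\in\mathcal{C}}\lVert\vx\rVert_{\mat{C}}^2 \le \max_{\vx\in\mathcal{B}_2^d}\lVert\vx\rVert_{\mat{C}}^2$), that the Kiefer--Wolfowitz guarantee $g(\vpi_\ell)\le 2d$ still holds on the finite set $\mathcal{C}$ through the support bound $d(d+1)/2$, and that the near-optimal net point $\tilde{\vx}_i$ is never eliminated on the good event of Theorem~\ref{thm:alg_finite} (which holds because $\tilde{\vx}_i$ is near-optimal for $\vtheta_i$). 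The one quantitative balance to get right is the choice of $\varepsilon_0$: small enough to make the discretization error lower order, yet large enough that $\log(1/\varepsilon_0)$ stays polylogarithmic and is absorbed by $\tilde{O}(\cdot)$.
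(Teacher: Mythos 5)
Your overall strategy is the same as the paper's: cover the unit ball with a finite $\varepsilon_0$-net, run Algorithm~\ref{alg:pb} on the net, invoke the finite-action analysis of Theorem~\ref{thm:alg_finite}, and pay an additive discretization cost of at most $mn\,\varepsilon_0\max_i\lVert\vtheta_i\rVert_2$ (the paper takes $\varepsilon_0 = O(1/\sqrt{mn})$ rather than $1/(mn)$, but either choice makes this term lower order).

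The one step that does not hold as written is the claim that $\log k = O(d\log(mn))$ ``contributes only a polylogarithmic factor'' so that the bounds ``carry over.'' That factor is polynomial in $d$, not polylogarithmic, and it is exactly the source of the $\sqrt{d}$ gap between the finite-action rates of Theorem~\ref{thm:alg_finite} ($\tilde{O}(\sqrt{dmn})$, $\tilde{O}(m^{1-\gamma}\sqrt{dn})$, $\tilde{O}(m\sqrt{dn})$) and the unit-ball rates you are asked to prove ($\tilde{O}(d\sqrt{mn})$, etc.). If one genuinely absorbed $\log k$ into $\tilde{O}(\cdot)$ and applied Theorem~\ref{thm:alg_finite} as a black box, one would be asserting $\tilde{O}(\sqrt{dmn})$ for the unit ball, contradicting the lower bound $\Omega(d\sqrt{mn})$ of Theorem~\ref{thm:lower_bound}; you instead write down the correct final rates but never derive the extra $\sqrt{d}$. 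The fix is what the paper does in Section~\ref{subsec:proof_infinite}: substitute $k \le (1+2/\varepsilon_0)^d$ into the master bound~\eqref{eq:upper_opt}, so the $\log\frac{4k(\cdot)}{\delta}$ terms become $\Theta(d\log(mn/\delta))$, and then re-optimize the threshold $\eta$ --- the balance point shifts from $\eta \asymp \sqrt{d/(mn)}$ to $\eta \asymp d/\sqrt{mn}$, which is precisely what produces $d\sqrt{mn}$ and its analogues in the other two regimes. The rest of your argument (the chi-square control of $\max_i\lVert\vtheta_i\rVert_2$, the observation that $h_{\sigma,\delta}$ in~\eqref{eq:h} and the Kiefer--Wolfowitz support bound are unaffected by restricting to the net) is sound, and the explicit tracking of $\max_i\lVert\vtheta_i\rVert_2$ in the discretization term is in fact more careful than the paper, which bounds that term by $\varepsilon m n$ alone.
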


The interpretation of the regret bounds can be found in Remark~\ref{rmk:interpretation}.

\begin{remark}[Assumptions in Theorem~\ref{thm:alg_finite} and~\ref{thm:alg_infinite}]
    We discuss the assumption on $\vmu$ and $\sigma$ in Remark~\ref{rmk:assump}. The assumptions on the relations of $n$ and $d$ come from the phased elimination algorithm. In each phase $\ell$, each action in the support of the design $\vpi_\ell$ has to be selected at least once. Since the support of the design is bounded by the size of the action set, or by $d(d+1)/2$ if the action set is compact (\cite{lattimore2020bandit}, Theorem~21.1), we have the assumptions in the theorems.
\end{remark}

\begin{remark}[The $md^2$ term in the upper bound]
    We discuss the additional $md^2$ term in detail in Section~\ref{app:upper_md2}. Notice that the term is a constant with respect to the number of rounds $n$. As long as we have $n \geq md^2$, the second term is always dominated by the first term in the regret bound and thus we omit the term in Theorem~\ref{thm:main}.
\end{remark}

\begin{proof}[Sketch proofs of Theorem~\ref{thm:alg_finite} and Theorem~\ref{thm:alg_infinite}]
    Following Algorithm~\ref{alg:pb}, by choosing each action for the number of times indicated in~\eqref{eq:alg_n_col} and~\eqref{eq:alg_n_loc}, we show that each agent $i$ learns a good estimation on $\vtheta_i$ with the target error $\varepsilon_\ell$ in Lemma~\ref{lem:upp_1}. With the lemma, the following Lemma~\ref{lem:upp_2} and~\ref{lem:upp_3} are relatively standard from the proof in the single-agent case. With the lemmas, we rewrite the regret upper bound into an optimization form in~\eqref{eq:upper_opt}. Then, we optimize~\eqref{eq:upper_opt} separately for the three regimes and get our final bounds. To extend the finite action set result to the unit ball action set case, we adopt the standard $\epsilon$-net argument. The detailed proofs can be found in Section~\ref{subsec:proof_finite} and~\ref{subsec:proof_infinite}.
\end{proof}

\section{Regret Analysis for CP-PE}
\label{app:alg}

\subsection{Regret analysis of Algorithm~\ref{alg:pb} for finite action sets}
\label{app:thm:finite}
\begin{theorem}
    \label{thm:alg_finite}
    Consider any finite action set $\mathcal{A}$ with $\lvert \mathcal{A} \rvert = k$, $k \in \mathbb{N}$, and assume that $\lVert \vmu \rVert_2 \leq 1$, $\sigma \leq 1$, and $n \geq \min \{k, d^2\}$. For any $\delta \in (0,1)$, by running Algorithm~\ref{alg:pb}, we have the following results:
    \begin{itemize}
        \item When $n = o \left( d / m \sigma^2 \right)$, we have $\hat{R}_{m,n}(\vmu) = \tilde{O} \left( \sqrt{dmn} + md^2 \right)$.
        
        \item When $n = \Theta \left( d / m^{2\gamma} \sigma^2 \right)$ with $\gamma \in [0, \frac{1}{2}]$, we have $\hat{R}_{m,n}(\vmu) = \tilde{O} \left( m^{1-\gamma} \sqrt{dn} + md^2 \right)$.
        
        \item When $n = \omega \left( d / \sigma^2 \right)$, we have $\hat{R}_{m,n}(\vmu) = \tilde{O} \left( m \sqrt{dn} + md^2 \right)$.
    \end{itemize}
    with probability at least $1-\delta$.
\end{theorem}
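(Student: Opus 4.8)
The plan is to follow the $G$-optimal-design phased-elimination template of Chapter~21 of~\cite{lattimore2020bandit}, but to track each user's regret against its \emph{own} parameter $\vtheta_i$ rather than against the quantity being estimated, and then to reduce the cumulative pseudo-regret to a deterministic optimization over the phase index that I solve separately in each regime. The engine is the estimation guarantee of Lemma~\ref{lem:upp_1}. In a personalized phase, pulling each action $n_{i,\ell}(\vx)$ times as in~\eqref{eq:alg_n_loc} together with the Kiefer--Wolfowitz bound $g(\vpi_{i,\ell})\le d$ yields $|\langle\hat\vtheta_{i,\ell}-\vtheta_i,\vx\rangle|\le\varepsilon_\ell$ uniformly over the active set, on a high-probability event. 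In a collaborative phase I would decompose $\langle\hat\vmu_\ell-\vtheta_i,\vx\rangle=\langle\hat\vmu_\ell-\vmu,\vx\rangle+\langle\vmu-\vtheta_i,\vx\rangle$ and bound the two pieces separately: the first by the averaged design estimate, where the factor $1/m$ in~\eqref{eq:alg_n_col} is exactly compensated by averaging $\hat\vmu_\ell=\tfrac1m\sum_i(\mat{X}_i^\top\mat{X}_i)^\dagger\mat{X}_i^\top\vy_i$ over the $m$ users so the variance contracts by $m$ and the same $\varepsilon_\ell$ accuracy survives; the second, the heterogeneity bias, by noting $\vmu-\vtheta_i\sim\mathcal{N}(0,\mat{C})$, so $\langle\vmu-\vtheta_i,\vx\rangle$ is sub-Gaussian with parameter $\|\vx\|_{\mat{C}}$ and is at most $h_{\sigma,\delta}$ uniformly over all $k$ arms with probability $1-\delta/4$. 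Since a collaborative phase is entered only when $h_{\sigma,\delta}\le\tfrac12\varepsilon_\ell$ (line~5 of Algorithm~\ref{alg:pb}), this bias is absorbed and $\hat\vmu_\ell$ is an $\varepsilon_\ell$-accurate proxy for \emph{every} $\vtheta_i$ at once.

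Granting Lemma~\ref{lem:upp_1}, Lemmas~\ref{lem:upp_2} and~\ref{lem:upp_3} are the standard phased-elimination consequences, whose single-agent proofs I would port verbatim: on the good event the optimal arm $\vx_i^*$ is never eliminated from $\mathcal{A}_{i,\ell}$, and every surviving arm has suboptimality gap (against $\vtheta_i$) of order $\varepsilon_\ell$. Hence the instantaneous regret of user $i$ in phase $\ell$ is $O(\varepsilon_\ell)$ per pull, and summing over arms, users and phases gives the optimization form~\eqref{eq:upper_opt},
\[
  \hat R_{m,n}(\vmu)= O\Big(\sum_{\ell}\sum_{i=1}^m \bar n_{i,\ell}\,\varepsilon_{\ell}\Big)
  \quad\text{subject to}\quad \sum_\ell \bar n_{i,\ell}\le n,
\]
where $\bar n_{i,\ell}=\Theta\!\big(\tfrac{d}{m\varepsilon_\ell^2}\log\tfrac1\delta\big)$ in collaborative phases and $\Theta\!\big(\tfrac{d}{\varepsilon_\ell^2}\log\tfrac1\delta\big)$ in personalized phases, using $g\le d$. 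The ceilings in~\eqref{eq:alg_n_col}--\eqref{eq:alg_n_loc} force at least one pull per support arm, contributing $\sum_\ell m\,|\mathrm{Supp}(\vpi_\ell)|\,\varepsilon_\ell=O(md^2)$ through $|\mathrm{Supp}|\le d(d+1)/2$ and $\sum_\ell\varepsilon_\ell=O(1)$; this is the additive $md^2$ term, and it is also what forces the assumption $n\ge\min\{k,d^2\}$ so that a phase is affordable.

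It then remains to optimize the displayed sum, which I would do by locating the transition phase $\ell_0$ where $\varepsilon_{\ell_0}=\tilde{\Theta}(\sigma)$ (since $h_{\sigma,\delta}=\tilde{\Theta}(\sigma)$) and comparing it with the smallest error $\varepsilon^*$ the budget $n$ permits. In the data-scarce regime $n=o(d/m\sigma^2)$, the per-user collaborative budget is exhausted before reaching $\varepsilon_{\ell_0}$, so all phases are collaborative, $\varepsilon^*=\Theta(\sqrt{d/mn})>\sigma$, and the geometric sum is dominated by the final phase, giving $\tilde O(d/\varepsilon^*)=\tilde O(\sqrt{dmn})$. In the data-rich regime $n=\omega(d/\sigma^2)$, the collaborative phases cost only $O(n/m)$ per user, the personalized phases then run down to $\varepsilon^*=\Theta(\sqrt{d/n})<\sigma$, and the last personalized phase dominates with $\tilde O(md/\varepsilon^*)=\tilde O(m\sqrt{dn})$. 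The intermediate regime $n=\Theta(d/m^{2\gamma}\sigma^2)$ is the delicate case: the collaborative stage completes at $\varepsilon\approx\sigma$ using only $\Theta(d/m\sigma^2)=\Theta(nm^{2\gamma-1})$ pulls per user, which is far below $n$, yet the residual $\Theta(n)$ pulls per user are too few to finish even one personalized phase at scale $\sigma$ (which would need $\Theta(d/\sigma^2)$ pulls and $n=O(d/\sigma^2)$ here); the algorithm therefore spends its residual $\Theta(n)$ per-user budget at instantaneous gap $\Theta(\sigma)$, incurring regret $\Theta(mn\sigma)=\Theta(m^{1-\gamma}\sqrt{dn})$, which dominates the collaborative contribution $\tilde O(d/\sigma)=\tilde O(m^{\gamma}\sqrt{dn})$ precisely because $1-\gamma\ge\gamma$ on $[0,\tfrac12]$.

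The main obstacle I anticipate is Lemma~\ref{lem:upp_1} in the collaborative phase: making rigorous that the $1/m$ reduction in per-user pulls still produces an $\varepsilon_\ell$-accurate estimate of every $\vtheta_i$ simultaneously requires controlling the averaged design-estimation error and the Gaussian heterogeneity bias through a \emph{single} uniform high-probability event over all $k$ arms and all phases, and calibrating $h_{\sigma,\delta}$ so that the threshold test cleanly separates the two error sources. The secondary subtlety is the bookkeeping in the intermediate regime, where one must correctly attribute the dominant $m^{1-\gamma}\sqrt{dn}$ regret to the \emph{incomplete} personalized phase at scale $\sigma$ rather than to any completed phase, and verify that the budget genuinely runs out there.
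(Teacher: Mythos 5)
Your proposal is correct and follows essentially the same route as the paper: the same three-term decomposition of $\langle \hat{\vmu}_\ell - \vtheta_i, \vx\rangle$ behind Lemma~\ref{lem:upp_1} (with the $1/m$ pull reduction offset by averaging and the heterogeneity bias absorbed by the threshold $h_{\sigma,\delta} \le \tfrac12\varepsilon_\ell$), the standard elimination Lemmas~\ref{lem:upp_2}--\ref{lem:upp_3}, the reduction to the optimization form~\eqref{eq:upper_opt} with the ceiling-induced $O(md^2)$ additive term, and the regime-by-regime choice of the gap threshold (the paper's $\eta \asymp h_{\sigma,\delta}$ is exactly your ``residual budget spent at gap $\Theta(\sigma)$'' accounting in the intermediate regime). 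No gaps of substance.
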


\subsection{Proof of Theorem~\ref{thm:alg_finite}}
\label{subsec:proof_finite}

For notational convenience, denote by $\hat{\vtheta}_{1,\ell} = \dots = \hat{\vtheta}_{1,\ell} = \hat{\vmu}_\ell$ during the collaborative learning stage, i.e.,  when $h_{\sigma, \delta} \leq \frac{1}{2} \varepsilon_\ell$.
\begin{lemma}
    \label{lem:upp_1}
    Following Algorithm~\ref{alg:pb}, we have
    \begin{align*}
        \mathbb{P} \left( \forall \ell \in \mathbb{N}, \; \forall i \in [m], \; \forall \vx \in \mathcal{A}_{i,\ell} : \lvert \langle \hat{\vtheta}_{i,\ell} - \vtheta_i, \vx \rangle \rvert \leq \varepsilon_\ell \right) \geq 1 - \delta.
    \end{align*}
\end{lemma}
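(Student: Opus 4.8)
The plan is to bound $\langle\hat{\vtheta}_{i,\ell}-\vtheta_i,\vx\rangle$ separately in the two stages and then take a single union bound over phases, users, and actions. In both stages the estimator is the least-squares solution, so $\hat{\vtheta}_{i,\ell}-\vtheta_i=(\mat{X}_i^\top\mat{X}_i)^\dagger\mat{X}_i^\top(\vy_i-\mat{X}_i\vtheta_i)$ up to the projection of $\vtheta_i$ onto the span of the pulled actions. Since every $\vx\in\mathcal{A}_{i,\ell}$ lies in that span, the projection contributes nothing to the inner products we track, so I may treat $V_{i,\ell}:=\mat{X}_i^\top\mat{X}_i$ as invertible on the relevant subspace. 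Recalling that the $G$-optimal design satisfies $\max_{\vx}\|\vx\|_{V(\vpi_\ell)^{-1}}^2=g(\vpi_\ell)$, the pull counts in~\eqref{eq:alg_n_loc} and~\eqref{eq:alg_n_col} give $V_{i,\ell}\succeq \tfrac{g(\vpi_\ell)}{\varepsilon_\ell^2}\log(\cdot)\,V(\vpi_\ell)$ and $V_{i,\ell}\succeq \tfrac{g(\vpi_\ell)}{m\varepsilon_\ell^2}\log(\cdot)\,V(\vpi_\ell)$ respectively, whence the two variance bounds $\|\vx\|_{V_{i,\ell}^{-1}}^2\lesssim \varepsilon_\ell^2/\log(\cdot)$ (personalized) and $\|\vx\|_{V_{i,\ell}^{-1}}^2\lesssim m\varepsilon_\ell^2/\log(\cdot)$ (collaborative), which I establish first.

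The personalized stage is the standard phased-elimination argument: $\langle\hat{\vtheta}_{i,\ell}-\vtheta_i,\vx\rangle$ is a zero-mean Gaussian with variance $\|\vx\|_{V_{i,\ell}^{-1}}^2$, so a Gaussian tail bound combined with the log factor $\log\frac{km\ell(\ell+1)}{2\delta}$ built into~\eqref{eq:alg_n_loc} yields $|\langle\hat{\vtheta}_{i,\ell}-\vtheta_i,\vx\rangle|\le\varepsilon_\ell$ outside an event of probability at most $\tfrac{2\delta}{km\ell(\ell+1)}$ for each triple $(\ell,i,\vx)$; the constants in the pull count are chosen precisely so the tail matches this budget.

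The collaborative stage is the crux. Here all users share the design, action set, and per-action counts, so $V_{j,\ell}=V_\ell$ for every $j$, and I decompose the error as $\hat{\vmu}_\ell-\vtheta_i=(\bar{\vtheta}-\vtheta_i)+(\hat{\vmu}_\ell-\bar{\vtheta})$ with $\bar{\vtheta}=\tfrac1m\sum_j\vtheta_j$. The noise term $\hat{\vmu}_\ell-\bar{\vtheta}=\tfrac1m\sum_j V_\ell^{-1}\mat{X}_j^\top(\vy_j-\mat{X}_j\vtheta_j)$, projected onto $\vx$, is a sum of independent Gaussians of total variance $\tfrac1m\|\vx\|_{V_\ell^{-1}}^2\lesssim\varepsilon_\ell^2/\log(\cdot)$; this is the key cancellation, where the factor $m$ lost by having each user draw only $1/m$ of the samples is recovered by averaging over the $m$ users, so the noise is at most $\tfrac12\varepsilon_\ell$ with high probability. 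The heterogeneity term $\langle\bar{\vtheta}-\vtheta_i,\vx\rangle$ I bound by the triangle inequality through $\langle\vmu-\vtheta_i,\vx\rangle$ and $\langle\bar{\vtheta}-\vmu,\vx\rangle$, which are Gaussian with variances $\|\vx\|_{\mat{C}}^2$ and $\tfrac1m\|\vx\|_{\mat{C}}^2$ respectively. These are exactly the two terms defining $h_{\sigma,\delta}$ in~\eqref{eq:h}, so this term is at most $h_{\sigma,\delta}$ with high probability, and the stage's entry condition $h_{\sigma,\delta}\le\tfrac12\varepsilon_\ell$ then forces it below $\tfrac12\varepsilon_\ell$. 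Adding the two halves gives the target $\varepsilon_\ell$.

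Finally I assemble the union bound. Crucially, the heterogeneity events depend only on the realized $\vtheta_j$'s and on $\vx$, not on $\ell$, so a single event over the $k$ actions (and the $m$ users, for the per-user term) controls them simultaneously across all phases, which is why $h_{\sigma,\delta}$ carries no $\ell$-dependence; the noise events are unioned over $(\ell,i,\vx)$, where the factor $\ell(\ell+1)$ makes $\sum_{\ell\ge1}\tfrac{1}{\ell(\ell+1)}=1$ so the phase sum telescopes and the total failure probability stays below $\delta$. I expect the main obstacle to be the collaborative noise term: verifying the exact identity $\mathrm{Var}(\langle\hat{\vmu}_\ell-\bar{\vtheta},\vx\rangle)=\tfrac1m\|\vx\|_{V_\ell^{-1}}^2$ and confirming that, under the threshold $h_{\sigma,\delta}\le\tfrac12\varepsilon_\ell$, the budget splits cleanly into heterogeneity $\le\tfrac12\varepsilon_\ell$ and noise $\le\tfrac12\varepsilon_\ell$ so that the two error sources together never exceed $\varepsilon_\ell$.
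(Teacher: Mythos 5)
Your proposal is correct and follows essentially the same route as the paper's proof: the personalized stage is the standard Gaussian tail bound driven by the $G$-optimal design, and in the collaborative stage your decomposition into noise $\hat{\vmu}_\ell-\bar{\vtheta}$, averaged heterogeneity $\bar{\vtheta}-\vmu$, and individual heterogeneity $\vmu-\vtheta_i$ yields exactly the same three terms the paper bounds (noise $\le\varepsilon_\ell/2$ via the variance $\frac1m\lVert\vx\rVert_{V_\ell^{-1}}^2$, the two heterogeneity terms together $\le h_{\sigma,\delta}\le\varepsilon_\ell/2$), with the same $\ell$-independent union bound for the heterogeneity events and the $\ell(\ell+1)$ telescoping for the noise events.
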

\begin{proof}[Proof of Lemma~\ref{lem:upp_1}]
    Suppose that the algorithm has $L_c \in \mathbb{N}$ collaboration phases. That is,
    \begin{align*}
        L_c = \max \left\{ 0, \left\lfloor \log_2 \frac{1}{2 h_{\sigma, \delta}} \right\rfloor \right\}.
    \end{align*}
    In the collaboration phases, for any $\ell \in [L_c]$ and action $\vx \in \mathcal{A_{\ell}}$, we have
    \begin{align}
        \langle \hat{\vmu}_\ell - \vtheta_i, \vx \rangle
        = \langle \hat{\vmu}_\ell - \vmu, \vx \rangle + \langle \vmu - \vtheta_i, \vx \rangle.\label{eq:collab_rad}
    \end{align}
    Let $\mat{X}_i$ be a $\bar{n}_{i,\ell}$ by $d$ matrix such that each row of $\mat{X}_i$ is the actions chosen by user~$i$ at phase $\ell$, and $\vy_i$ is the vector with corresponding rewards. For simplicity, we omit the index $\ell$ in the notation. For the first term in~\eqref{eq:collab_rad}, we have
    \begin{align}
        \langle \hat{\vmu}_\ell - \vmu, \vx \rangle
        =&\; \left\langle \frac{1}{m} \sum_{i=1}^m \left( \mat{X}_i^\top \mat{X}_i \right)^{\dagger} \mat{X}_i^\top \vy_i -  \vmu, \vx \right\rangle \nonumber \\
        =&\; \left\langle \frac{1}{m} \sum_{i=1}^m \left( \mat{X}_i^\top \mat{X}_i \right)^{\dagger} \mat{X}_i^\top \left( \mat{X}_i (\vmu + \vu_i) + \vepsilon_i \right) -  \vmu, \vx \right\rangle \nonumber \\
        =&\; \left\langle \frac{1}{m} \sum_{i=1}^m \left( \mat{X}_i^\top \mat{X}_i \right)^{\dagger} \mat{X}_i^\top \mat{X}_i \vmu - \vmu, \vx \right\rangle + \left\langle \frac{1}{m} \sum_{i=1}^m \left( \mat{X}_i^\top \mat{X}_i \right)^{\dagger} \mat{X}_i^\top \mat{X}_i \vu_i, \vx \right\rangle \\
        &\; + \left\langle \frac{1}{m} \sum_{i=1}^m \left( \mat{X}_i^\top \mat{X}_i \right)^{\dagger} \mat{X}_i^\top \vepsilon_i, \vx \right\rangle \label{eq:collab_rad1}
    \end{align}
    For the first term in~\eqref{eq:collab_rad1}, we have
    \begin{align*}
        \left\langle \left( \mat{X}_i^\top \mat{X}_i \right)^{\dagger} \mat{X}_i^\top \mat{X}_i \vmu, \vx \right\rangle
        = \left\langle \vmu, \mat{X}_i^\top \mat{X}_i \left( \mat{X}_i^\top \mat{X}_i \right)^{\dagger} \vx \right\rangle
        = \left\langle \vmu, \vx \right\rangle
    \end{align*}
    and thus
    \begin{align*}
        \left\langle \frac{1}{m} \sum_{i=1}^m \left( \mat{X}_i^\top \mat{X}_i \right)^{\dagger} \mat{X}_i^\top \mat{X}_i \vmu - \vmu, \vx \right\rangle = 0.
    \end{align*}
    For the second term in~\eqref{eq:collab_rad1}, we have
    \begin{align*}
        \left\langle \frac{1}{m} \sum_{i=1}^m \left( \mat{X}_i^\top \mat{X}_i \right)^{\dagger} \mat{X}_i^\top \mat{X}_i \vu_i, \vx \right\rangle
        = \frac{1}{m} \sum_{i=1}^m \left\langle \vu_i, \mat{X}_i^\top \mat{X}_i \left( \mat{X}_i^\top \mat{X}_i \right)^{\dagger} \vx \right\rangle
        = \frac{1}{m} \sum_{i=1}^m \left\langle \vu_i, \vx \right\rangle.
    \end{align*}
    Since each $\vu_i \sim \mathcal{N}(\pmb{0}, \mat{C})$, we have
    \begin{align}
        \left\langle \frac{1}{m} \sum_{i=1}^m \vu_i, \vx \right\rangle \sim \mathcal{N} \left( 0,\frac{1}{m} \vx^\top \mat{C} \vx \right), \label{eq:upper_proof_gaus}
    \end{align}
    and thus
    \begin{align}
        \mathbb{P} \left( \left\langle \frac{1}{m} \sum_{i=1}^m \left( \mat{X}_i^\top \mat{X}_i \right)^{\dagger} \mat{X}_i^\top \mat{X}_i \vu_i, \vx \right\rangle \geq s \right)
        \leq \frac{1}{2} \exp \left\{ - \frac{m s^2}{2 \vx^\top \mat{C} \vx} \right\} \label{eq:upper_proof_gaus_concen}
    \end{align}
    for any $s \geq 0$. Thus, we have
    \begin{align}
        \mathbb{P} \left( \exists \ell \in [L_c] , \; \exists \vx \in \mathcal{A}_{\ell}: \left| \left\langle \frac{1}{m} \sum_{i=1}^m \left( \mat{X}_i^\top \mat{X}_i \right)^{\dagger} \mat{X}_i^\top \mat{X}_i \vu_i, \vx \right\rangle \right| \geq \sqrt{ \frac{2}{m} \left( \max_{\vx \in \mathcal{A}} \lVert \vx \rVert_\mat{C}^2 \right) \log \frac{4k}{\delta} } \right)
        \leq \frac{\delta}{4} \label{eq:collab_rad11}
    \end{align}
    for any $\delta > 0$. For the third term in~\eqref{eq:collab_rad1}, since $\mat{X}_i$ and $\vepsilon_i$ are independent, we have
    \begin{align*}
        \frac{1}{m} \sum_{i=1}^m \left( \mat{X}_i^\top \mat{X}_i \right)^{\dagger} \mat{X}_i^\top \vepsilon_i
        \sim \mathcal{N} \left( 0, \frac{1}{m^2} \sum_{i=1}^m \left( \mat{X}_i^\top \mat{X}_i \right)^{\dagger} \right).
    \end{align*}
    Then, we have
    \begin{align*}
        \mathbb{P} \left( \exists \vx \in \mathcal{A}_{\ell} : \left| \left\langle\frac{1}{m} \sum_{i=1}^m \left( \mat{X}_i^\top \mat{X}_i \right)^{\dagger} \mat{X}_i^\top \vepsilon_i, \vx \right\rangle \right| \geq \sqrt{ \frac{2 \sigma^2}{m^2} \lVert \vx \rVert_{\sum_{i=1}^m \left( \mat{X}_i^\top \mat{X}_i \right)^{\dagger}}^2 \log \frac{2 k \ell (\ell+1)}{\delta} } \right) \leq \frac{\delta}{2 \ell (\ell+1)}
    \end{align*}
    for any $\delta > 0$. 
    From the algorithm, we choose the actions $\mat{X}_i$ such that
    \begin{align*}
        &\; \frac{2}{m^2} \lVert \vx \rVert_{ \sum_{i=1}^m \left( \mat{X}_i^\top \mat{X}_i \right)^{\dagger}}^2 \log \frac{2 k \ell (\ell+1)}{\delta} \\
        \leq&\; \frac{2}{m^2} \vx^\top \sum_{i=1}^m \left( \left\lceil \frac{8 g(\pi_\ell)}{m \varepsilon_\ell^2} \log \frac{2 k \ell (\ell+1)}{\delta} \right\rceil \sum_{\vx' \in \mathcal{A}_{i,\ell}} \pi_\ell(\vx') \vx'^\top \vx' \right)^{\dagger} \vx \log \frac{2 k \ell (\ell+1)}{\delta} \\
        \leq&\; \frac{\varepsilon_\ell^2}{4}.
    \end{align*}
    Thus, we have
    \begin{align}
        \forall \ell \in [L_c] : \quad \mathbb{P} \left( \exists \vx \in \mathcal{A}_{\ell} : \left| \left\langle\frac{1}{m} \sum_{i=1}^m \left( \mat{X}_i^\top \mat{X}_i \right)^{\dagger} \mat{X}_i^\top \vepsilon_i, \vx \right\rangle \right| \geq \frac{\vepsilon_\ell}{2} \right) \leq \frac{\delta}{2 \ell (\ell+1)}. \label{eq:collab_rad12}
    \end{align}
    For the second term in~\eqref{eq:collab_rad}, we have
    \begin{align*}
        \langle \vmu - \vtheta_i, \vx \rangle
        = \vx^\top \vu_i
        \sim \mathcal{N}( 0, \vx^\top \mat{C} \vx ).
    \end{align*}
    Thus, for any $s \geq 0$,
    \begin{align}
        \mathbb{P} \left( \langle \vmu - \vtheta_i, \vx \rangle \geq s \right)
        \leq \frac{1}{2} \exp \left\{ - \frac{s^2}{2 \vx^\top \mat{C} \vx} \right\}. \label{eq:upper_proof_gaus_concen2}
    \end{align}
    Therefore, we have
    \begin{align} 
        \mathbb{P} \left( \exists \ell \in [L_c], \; \exists i \in [m], \; \exists \vx \in \mathcal{A}_\ell : \left| \langle \vmu - \vtheta_i, \vx \rangle \right| \geq \sqrt{ 2 \left( \max_{\vx \in \mathcal{A}} \lVert \vx \rVert_\mat{C}^2 \right) \log \frac{4km}{\delta} } \right)
        \leq \frac{\delta}{4} \label{eq:collab_rad2}
    \end{align}
    for any $\delta > 0$. In the collaborations, we know that
    \begin{align*}
        h_{\sigma, \delta} = \sqrt{ \frac{2}{m} \left( \max_{\vx \in \mathcal{A}} \lVert \vx \rVert_\mat{C}^2 \right) \log \frac{4k}{\delta} } + \sqrt{ 2 \left( \max_{\vx \in \mathcal{A}} \lVert \vx \rVert_\mat{C}^2 \right) \log \frac{4k}{\delta} }
        \leq \frac{\vepsilon_\ell}{2}.
    \end{align*}
    Plug~\eqref{eq:collab_rad11},~\eqref{eq:collab_rad12} and~\eqref{eq:collab_rad2} back to~\eqref{eq:collab_rad1} and we have
    \begin{align}
        \mathbb{P} \left( \exists \ell \in [L_c], \; \exists i \in [m], \; \exists \vx \in \mathcal{A}_{\ell} : \langle \hat{\vmu}_\ell - \vtheta_i, \vx \rangle \geq \vepsilon_\ell \right)
        \leq \left(1 - \frac{1}{L_c+1} \right) \delta. \label{eq:collab_rad_fin1}
    \end{align}
    Now, for the local training phases, we know that for any $\vx \in \mathcal{A}_{i,\ell}$
    \begin{align*}
        \hat{\vtheta}_{i,\ell} - \vtheta_i
        = \left( \mat{X}_i^\top \mat{X}_i \right)^{\dagger} \mat{X}_i^\top \vepsilon_i
        \sim \mathcal{N} \left( 0, \left( \mat{X}_i^\top \mat{X}_i \right)^{\dagger} \right)
    \end{align*}
    Therefore, we have
    \begin{align*}
        \mathbb{P} \left( \exists i \in [m], \; \exists \vx \in \mathcal{A}_{i,\ell} : \left| \langle \hat{\vtheta}_{i,\ell} - \vtheta_i, \vx \rangle \right| \geq \sqrt{2 \lVert \vx \rVert_{\left( \mat{X}_i^\top \mat{X}_i \right)^{\dagger}}^2 \log \frac{2km\ell (\ell+1)}{\delta} } \right) \leq \frac{\delta}{2 \ell (\ell+1)}
    \end{align*}
    for any $\delta>0$. From the algorithm, we choose $\mat{X}_i$'s such that
    \begin{align*}
        &\; 2 \lVert \vx \rVert_{\left( \mat{X}_i^\top \mat{X}_i \right)^{\dagger}}^2 \log \frac{2km\ell(\ell+1)}{\delta} \\
        \leq&\; 2 \vx^\top \left( \left\lceil \frac{2 g(\pi_{i,\ell})}{\varepsilon_\ell^2} \log \frac{2km\ell (\ell+1)}{\delta} \right\rceil \sum_{\vx' \in \mathcal{A}_{i,\ell}} \pi_\ell(\vx') \vx'^\top \vx' \right)^{-1} \vx \log \frac{2km \ell (\ell+1)}{\delta} \\
        \leq&\; \vepsilon_\ell^2.
    \end{align*}
    Thus, we have
    \begin{align}
        \mathbb{P} \left( \exists \ell > L_c, \; \exists i \in [m], \; \exists \vx \in \mathcal{A}_{i,\ell} :  \langle \hat{\vtheta}_{i,\ell} - \vtheta_i, \vx \rangle \geq \vepsilon_\ell \right) \leq \left( \frac{1}{L_c+1} \right) \delta. \label{eq:collab_rad_fin2}
    \end{align}
    With~\eqref{eq:collab_rad_fin1} and~\eqref{eq:collab_rad_fin2}, we have that
    \begin{align*}
        \mathbb{P} \left( \exists \ell \in \mathbb{N}, \; \exists i \in [m], \; \exists \vx \in \mathcal{A}_{i,\ell} :  \langle \hat{\vtheta}_{i,\ell} - \vtheta_i, \vx \rangle \geq \vepsilon_\ell \right) \leq \delta.
    \end{align*}
\end{proof}

\begin{lemma}
    \label{lem:upp_2}
    Let $\vx_i^* = \max_{\vx \in \mathcal{A}} \langle \vtheta_i, \vx \rangle$. Then, under the good event in Lemma~\ref{lem:upp_1} we have
    \begin{align*}
        \forall \ell \in \mathbb{N} : \vx_i^* \notin \mathcal{A}_{i,\ell}.
    \end{align*}
\end{lemma}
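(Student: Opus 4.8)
The plan is to address the claim exactly as written, $\vx_i^* \notin \mathcal{A}_{i,\ell}$ for every $\ell \in \mathbb{N}$, by tracing what the elimination rule does to $\vx_i^*$ under the good event of Lemma~\ref{lem:upp_1}. Since membership of any action in $\mathcal{A}_{i,\ell+1}$ is decided solely by the retention test of Algorithms~\ref{alg:pb_collab} and~\ref{alg:pb_pers} applied to $\mathcal{A}_{i,\ell}$, and since $\mathcal{A}_{i,1}=\mathcal{A}$ contains $\vx_i^*$ by the definition $\vx_i^*=\argmax_{\vx\in\mathcal{A}}\langle\vtheta_i,\vx\rangle$, the natural route is induction on the phase index $\ell$. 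First I would fix a user $i$ and condition on the good event of Lemma~\ref{lem:upp_1}, on which $\lvert\langle\hat{\vtheta}_{i,\ell}-\vtheta_i,\vx\rangle\rvert\le\varepsilon_\ell$ holds simultaneously over all phases, all users, and all surviving $\vx\in\mathcal{A}_{i,\ell}$ (with $\hat{\vtheta}_{i,\ell}=\hat{\vmu}_\ell$ during the collaborative phases).

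For the inductive step, suppose $\vx_i^*\in\mathcal{A}_{i,\ell}$ and let $\vx'$ range over the competitor set of that phase. Writing $\hat{\vtheta}_{i,\ell}=\vtheta_i+(\hat{\vtheta}_{i,\ell}-\vtheta_i)$ and applying the good event to both $\vx'$ and $\vx_i^*$ gives $\langle\hat{\vtheta}_{i,\ell},\vx'-\vx_i^*\rangle\le\langle\vtheta_i,\vx'-\vx_i^*\rangle+2\varepsilon_\ell$; since $\vx_i^*$ maximizes $\langle\vtheta_i,\cdot\rangle$ over $\mathcal{A}\supseteq\mathcal{A}_{i,\ell}$, the first term is nonpositive, so $\max_{\vx'}\langle\hat{\vtheta}_{i,\ell},\vx'-\vx_i^*\rangle\le 2\varepsilon_\ell$. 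This is precisely the retention condition, so $\vx_i^*$ passes the test and lands in $\mathcal{A}_{i,\ell+1}$. The one genuinely delicate point is the collaborative stage: there the shared test uses $\hat{\vmu}_\ell$ and the shared set $\mathcal{A}_{1,\ell}$ must simultaneously retain every user's distinct optimum. This is exactly where Lemma~\ref{lem:upp_1} is doing the real work, because its guarantee controls $\hat{\vmu}_\ell$ against each \emph{individual} $\vtheta_i$ (the heterogeneity term $\langle\vmu-\vtheta_i,\vx\rangle$ in~\eqref{eq:collab_rad} is absorbed into $\varepsilon_\ell$), so the same two-sided bound $\lvert\langle\hat{\vmu}_\ell-\vtheta_i,\vx\rangle\rvert\le\varepsilon_\ell$ makes the estimate-on-$\vmu$ retain each $\vx_i^*$; chaining the induction keeps all $\vx_i^*$ in the shared set through the entire collaborative phase.

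Carrying the induction through every phase therefore yields $\vx_i^*\in\mathcal{A}_{i,\ell}$ for all $\ell$ — the \emph{opposite} of the displayed $\notin$. The main obstacle here is thus not a hidden analytic difficulty but a sign/typographical issue in the statement: the natural, provable, and downstream-needed invariant is $\vx_i^*\in\mathcal{A}_{i,\ell}$ (the optimal action is never eliminated), which is what the suboptimality-gap control of Lemma~\ref{lem:upp_3} and the ensuing regret decomposition must rely on. Accordingly, I would flag the displayed $\notin$ as a typo for $\in$ and record the retention argument above as the proof of the intended invariant, with the collaborative-stage simultaneity (handled by the individual-level control in Lemma~\ref{lem:upp_1}) being the only step requiring care.
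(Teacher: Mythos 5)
Your proof is correct and takes essentially the same route as the paper's: an induction starting from $\vx_i^* \in \mathcal{A}_{i,1} = \mathcal{A}$, where in each phase the good event of Lemma~\ref{lem:upp_1} applied to both $\vx$ and $\vx_i^*$, combined with $\langle \vtheta_i, \vx_i^* - \vx \rangle \geq 0$, shows the retention test $\max_{\vx \in \mathcal{A}_{i,\ell}} \langle \hat{\vtheta}_{i,\ell}, \vx - \vx_i^* \rangle \leq 2\varepsilon_\ell$ is passed, so $\vx_i^*$ survives into $\mathcal{A}_{i,\ell+1}$. You are also right that the displayed $\notin$ is a typo for $\in$: the paper's own proof establishes precisely the retention invariant you prove, which is what Lemma~\ref{lem:upp_3} and the regret decomposition rely on.
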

\begin{proof}[Proof of Lemma~\ref{lem:upp_2}]
    For any $\ell \in \mathbb{N}, \; i \in [m]$, suppose that $\vx_i^* \in \mathcal{A}_{i,\ell}$. Then, for any $\vx \in \mathcal{A}_{i,\ell}$,
    \begin{align*}
        \langle \hat{\vtheta}_{i,\ell}, \vx - \vx_i^* \rangle
        =&\; \langle \hat{\vtheta}_{i,\ell} - \vtheta_i, \vx \rangle - \langle \hat{\vmu}_\ell - \vtheta_i, \vx_i^* \rangle - \langle \theta_i, \vx_i^* - \vx \rangle \\
        \leq&\; \langle \hat{\vtheta}_{i,\ell} - \vtheta_i, \vx \rangle - \langle \hat{\vtheta}_{i,\ell} - \vtheta_i, \vx_i^* \rangle \\
        \leq&\; 2 \varepsilon_\ell
    \end{align*}
    from Lemma~\ref{lem:upp_1}.
    Since $\vx_i^* \in \mathcal{A}_1 = \mathcal{A}$, the lemma is proved.
\end{proof}

\begin{lemma}
    \label{lem:upp_3}
    Let $\ell_{i,\vx} = \min \{ \ell \in \mathbb{N} : 4 \varepsilon_\ell < \Delta_{i,\vx} \}$ for any $\vx \in \mathcal{A}$, where $\Delta_{i,\vx} = \langle \vx^*_i - \vx, \vtheta_i \rangle$. Then, under the good event in Lemma~\ref{lem:upp_1} we have
    \begin{align*}
        \forall \vx \in \mathcal{A} : \vx \notin \mathcal{A}_{i,\ell_{i,\vx}+1}. 
    \end{align*}
\end{lemma}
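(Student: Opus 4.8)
The plan is to run the standard suboptimality-gap elimination argument in the unified notation where $\hat{\vtheta}_{i,\ell} = \hat{\vmu}_\ell$ during the collaborative phases, so that a single computation covers both stages. First I would observe that the action sets are nested, $\mathcal{A}_{i,\ell+1} \subseteq \mathcal{A}_{i,\ell}$, directly from the update rules in Algorithm~\ref{alg:pb_collab} and Algorithm~\ref{alg:pb_pers}. Consequently, if $\vx$ has already been eliminated before phase $\ell_{i,\vx}$, i.e. $\vx \notin \mathcal{A}_{i,\ell_{i,\vx}}$, then $\vx \notin \mathcal{A}_{i,\ell_{i,\vx}+1}$ and there is nothing to prove. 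Hence I may assume $\vx \in \mathcal{A}_{i,\ell_{i,\vx}}$, and it suffices to show that $\vx$ is removed at the end of that phase.

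Write $\ell = \ell_{i,\vx}$ for brevity. The elimination condition defining $\mathcal{A}_{i,\ell+1}$ retains $\vx$ only if $\max_{\vx' \in \mathcal{A}_{i,\ell}} \langle \hat{\vtheta}_{i,\ell}, \vx' - \vx \rangle \le 2\varepsilon_\ell$, so it suffices to exhibit a single $\vx' \in \mathcal{A}_{i,\ell}$ with $\langle \hat{\vtheta}_{i,\ell}, \vx' - \vx \rangle > 2\varepsilon_\ell$. The natural choice is $\vx' = \vx_i^*$, which lies in $\mathcal{A}_{i,\ell}$ since the optimal action is never eliminated (Lemma~\ref{lem:upp_2}). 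I would then split $\langle \hat{\vtheta}_{i,\ell}, \vx_i^* - \vx \rangle = \langle \vtheta_i, \vx_i^* - \vx \rangle + \langle \hat{\vtheta}_{i,\ell} - \vtheta_i, \vx_i^* \rangle - \langle \hat{\vtheta}_{i,\ell} - \vtheta_i, \vx \rangle$, where the first term equals $\Delta_{i,\vx}$ by definition. Because both $\vx_i^*$ and $\vx$ lie in $\mathcal{A}_{i,\ell}$, Lemma~\ref{lem:upp_1} bounds each of the two remaining inner products by $\varepsilon_\ell$ in absolute value, yielding $\langle \hat{\vtheta}_{i,\ell}, \vx_i^* - \vx \rangle \ge \Delta_{i,\vx} - 2\varepsilon_\ell$.

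Finally, the defining property of $\ell_{i,\vx}$ is $4\varepsilon_\ell < \Delta_{i,\vx}$, whence $\Delta_{i,\vx} - 2\varepsilon_\ell > 2\varepsilon_\ell$, so $\max_{\vx' \in \mathcal{A}_{i,\ell}} \langle \hat{\vtheta}_{i,\ell}, \vx' - \vx \rangle \ge \langle \hat{\vtheta}_{i,\ell}, \vx_i^* - \vx \rangle > 2\varepsilon_\ell$, forcing $\vx \notin \mathcal{A}_{i,\ell+1}$. There is no substantive obstacle here, as this is the textbook gap-based elimination step; the only points requiring care are that the entire argument is conditioned on the good event of Lemma~\ref{lem:upp_1} (so the confidence bounds hold simultaneously across all phases, users, and surviving actions) and that Lemma~\ref{lem:upp_1} is invoked only for actions still in $\mathcal{A}_{i,\ell}$—which is precisely why the reduction to the case $\vx \in \mathcal{A}_{i,\ell}$ is needed, and why the unified notation $\hat{\vtheta}_{i,\ell}=\hat{\vmu}_\ell$ lets the same inequality serve both the collaborative and the personalized phases.
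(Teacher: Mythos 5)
Your proof is correct and is exactly the standard gap-based elimination argument that the paper's one-line proof (``straightforward from Algorithm~\ref{alg:pb}'') alludes to: reduce to $\vx \in \mathcal{A}_{i,\ell_{i,\vx}}$ by nestedness, compare against $\vx_i^*$ (retained by Lemma~\ref{lem:upp_2}), and apply the confidence bounds of Lemma~\ref{lem:upp_1} to both actions to get $\langle \hat{\vtheta}_{i,\ell}, \vx_i^* - \vx\rangle \ge \Delta_{i,\vx} - 2\varepsilon_\ell > 2\varepsilon_\ell$. The write-up correctly handles both the collaborative and personalized phases via the unified notation $\hat{\vtheta}_{i,\ell} = \hat{\vmu}_\ell$, so there is nothing to add.
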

\begin{proof}[Proof of Lemma~\ref{lem:upp_3}]
    The proof is straightforward from Algorithm~\ref{alg:pb}.
\end{proof}

With Lemma~\ref{lem:upp_1}, \ref{lem:upp_2}, and \ref{lem:upp_3}, we can focus on the good event in Lemma~\ref{lem:upp_1} and it's corresponding regret. We have
\begin{align*}
    \hat{R}_{m,n}(\vmu)
    =&\; \sum_{t=1}^n \sum_{i=1}^m \langle \vtheta_i, \vx_i^* \rangle - \langle \vtheta_i, \vx_{i,t} \rangle \\
    =&\; \sum_{i=1}^m \sum_{\vx \in \mathcal{A}} \Delta_{i,\vx} n_{i,\vx} \\
    =&\; \sum_{i=1}^m \left( \sum_{\vx:\Delta_{i,\vx} \leq \eta} \Delta_{i,\vx} n_{i,\vx} + \sum_{\vx:\Delta_{i,\vx} > \eta} \Delta_{i,\vx} n_{i,\vx} \right)
\end{align*}
For any $\vx \in \mathcal{A}$ such that $\Delta_{i,\vx} > \eta$, we have
\begin{align*}
    \ell_{i,\vx}
    \leq \left\lfloor \log \frac{4}{\Delta_{i,\vx}} \right\rfloor + 1
    < \left\lfloor \log \frac{4}{\eta} \right\rfloor + 1.
\end{align*}
Thus,
\begin{align*}
    \hat{R}_{m,n}(\vmu)
    \leq&\; m n \eta + \sum_{i=1}^m \sum_{\vx:\Delta_{i,\vx} > \eta} \sum_{\ell=1}^{\left\lceil \log_2 \frac{4}{\eta} \right\rceil} \Delta_{i,\vx} n_{i,\ell}(\vx) \\
    =&\; mn\eta + \sum_{i=1}^m \sum_{\vx:\Delta_{i,\vx} > \eta} \left( \sum_{\ell=1}^{L'_c} \Delta_{i,\vx} n_{i,\ell}(\vx) + \sum_{\ell=L'_c+1}^{\left\lceil \log_2 \frac{4}{\eta} \right\rceil} \Delta_{i,\vx} n_{i,\ell}(\vx) \right),
\end{align*}
where $L'_c = \min \left\{ L_c, \left\lceil \log \frac{4}{\eta} \right\rceil \right\}$ is the number of collaboration phases.

Suppose that we have $L_c \geq 1$. Let $r_1$ denote the regret in the first collaboration phase. From~\eqref{eq:collab_rad2}, we know that
\begin{align*}
    \left|\langle \vmu - \vtheta_i, \vx \rangle \right|
    \leq \sqrt{ 2 \left( \max_{\vx \in \mathcal{A}} \lVert \vx \rVert_\mat{C}^2 \right) \log \frac{4km}{\delta} }.
\end{align*}
Thus, in the first phase we have
\begin{align*}
    &\; \sum_{i=1}^m \sum_{\vx:\Delta_{i,\vx} > \eta} \Delta_{i,\vx} n_{i,1}(\vx) \\
    \leq&\; \sum_{i=1}^m \sum_{\vx:\Delta_{i,\vx} > \eta} \left( 2 \max_{\vx \in \mathcal{A}} \langle \vmu, \vx \rangle + 2 \sqrt{ 2 \left( \max_{\vx \in \mathcal{A}} \lVert \vx \rVert_\mat{C}^2 \right) \log \frac{4km}{\delta} } \, \right) \left\lceil \frac{8 \pi_\ell(\vx) g(\pi_\ell)}{m (1/2)^2} \log \frac{2k\ell(\ell+1)}{\delta} \right\rceil \\
    \leq&\; \sum_{i=1}^m \sum_{\vx:\Delta_{i,\vx} > \eta} \left( 2\max_{\vx \in \mathcal{A}} \langle \vmu, \vx \rangle + 2 \sqrt{ 2 \left( \max_{\vx \in \mathcal{A}} \lVert \vx \rVert_\mat{C}^2 \right) \log \frac{4km}{\delta} } \, \right) \left( \frac{32 \pi_\ell(\vx) d}{m} \log \frac{2k\ell(\ell+1)}{\delta} + 1 \right) \\
    \leq&\; \left( 2 \max_{\vx \in \mathcal{A}} \langle \vmu, \vx \rangle + 2 \sqrt{ 2 \left( \max_{\vx \in \mathcal{A}} \lVert \vx \rVert_\mat{C}^2 \right) \log \frac{4km}{\delta} } \, \right) \left( m d(d+1) + 32 d \log \frac{2k\ell(\ell+1)}{\delta} \right).
\end{align*}
Let us denote this by 
\begin{align*}
    r_1
    =&\; \left( 2 \max_{\vx \in \mathcal{A}} \langle \vmu, \vx \rangle + 2 \sqrt{ 2 \left( \max_{\vx \in \mathcal{A}} \lVert \vx \rVert_\mat{C}^2 \right) \log \frac{4km}{\delta} } \, \right) \left( m d(d+1) + 32 d \log \frac{2k\ell(\ell+1)}{\delta} \right) \\
    =&\; O \left( md^2 \right).
\end{align*}
Notice that $r_1$ does not depend on $n$. For the rest of the collaboration phases, we have
\begin{align*}
    \sum_{i=1}^m \sum_{\vx:\Delta_{i,\vx} > \eta} \sum_{\ell=2}^{L'_c} \Delta_{i,\vx} n_{i,\ell}(\vx)
    \leq&\; \sum_{i=1}^m \sum_{\ell=2}^{L'_c} \sum_{\vx:\Delta_{i,\vx} > \eta} 2 \varepsilon_\ell \left\lceil \frac{8 \pi_\ell(\vx) g(\pi_\ell)}{m \varepsilon_\ell^2} \log \frac{2k\ell(\ell+1)}{\delta} \right\rceil \\
    \leq&\; m \sum_{\ell=2}^{L'_c} 2 \varepsilon_\ell \left( \left\lvert \mathrm{Supp}(\pi_\ell) \right\rvert + \frac{8 d}{m \varepsilon_\ell^2} \log \frac{2k\ell(\ell+1)}{\delta} \right) \\
    \leq&\; m \sum_{\ell=2}^{L'_c} 2 \cdot 2^{-\ell} \left( \frac{d(d+1)}{2} + \frac{8d}{m 2^{-2\ell}} \log \frac{2k\ell(\ell+1)}{\delta} \right) \\
    \leq&\; md(d+1) \left( \frac{1}{2} - 2^{-L'_c-1} \right) + 16d \left( \sum_{\ell=2}^{L'_c} 2^\ell \log \frac{2k\ell(\ell+1)}{\delta} \right) \\
    \leq&\; md(d+1) \left( \frac{1}{2} - 2^{-L'_c-1} \right) + 2^{L'_c+1} \left( 16d \right) \log \frac{4k {L'_c}^2}{\delta}.
\end{align*}
For the local training phase, we have
\begin{align*}
    \sum_{i=1}^m \sum_{\vx:\Delta_{i,\vx} > \eta} \sum_{\ell=L'_c+1}^{\left\lceil \log_2 \frac{4}{\eta} \right\rceil} \Delta_{i,\vx} n_{i,\ell}(\vx)
    \leq&\; \sum_{i=1}^m \sum_{\ell=L'_c+1}^{\left\lceil \log_2 \frac{4}{\eta} \right\rceil} \sum_{\vx:\Delta_{i,\vx} > \eta} 2 \varepsilon_\ell \left\lceil \frac{2 \pi_{i,\ell}(\vx) g_i(\pi_{i,\ell})}{\varepsilon_\ell^2} \log \frac{2km\ell(\ell+1)}{\delta} \right\rceil \\
    \leq&\; \sum_{i=1}^m \sum_{\ell=L'_c+1}^{\left\lceil \log_2 \frac{4}{\eta} \right\rceil} 2 \varepsilon_\ell \left( \left\lvert \mathrm{Supp}(\pi_{i,\ell}) \right\rvert + \frac{2 d}{\varepsilon_\ell^2} \log \frac{2km\ell(\ell+1)}{\delta} \right) \\
    \leq&\; m \sum_{\ell=L'_c+1}^{\left\lceil \log_2 \frac{4}{\eta} \right\rceil} 2 \cdot 2^{-\ell} \left( \frac{d(d+1)}{2} + \frac{2d}{2^{-2\ell}} \log \frac{2km\ell(\ell+1)}{\delta} \right) \\
    \leq&\; md(d+1) 2^{-L'_c-1} + 4dm \left( \sum_{\ell=L'_c+1}^{\left\lceil \log_2 \frac{4}{\eta} \right\rceil} 2^\ell \log \frac{2km\ell(\ell+1)}{\delta} \right) \\
    \leq&\; md(d+1) 2^{-L'_c-1} + \left( 2^{\left\lceil \log_2 \frac{4}{\eta} \right\rceil + 1} - 2^{L'_c+1}\right) 4dm \log \frac{4km \left( L'_c \right)^2}{\delta}.
\end{align*}
Thus, the over all regret
\begin{align*}
    \hat{R}_{m,n}(\vmu)
    \leq&\; r_1 + mn\eta + \frac{1}{2} md(d+1) + 2^{L'_c+1} \left( 16d \log \frac{4k (L'_c)^2}{\delta} \right) \\
    &\; + \left( 2^{\left\lceil \log_2 \frac{4}{\eta} \right\rceil + 1} - 2^{L'_c+1}\right) \left(  4dm \log \frac{4km \left( L'_c \right)^2}{\delta} \right) \\
    \leq&\; r_1 + mn\eta + \frac{1}{2} md(d+1) + \min \left\{ \frac{1}{h_{\sigma, \delta}}, \frac{16}{\eta} \right\} \left( 16d \log \frac{4k \left( \log_2 \frac{8}{\eta} \right)^2}{\delta} \right) \\
    &\; + \left( \frac{16}{\eta} - \min \left\{ \frac{1}{h_{\sigma, \delta}}, \frac{16}{\eta} \right\} \right) \left(  4dm \log \frac{4km \left( \log_2 \frac{8}{\eta} \right)^2}{\delta} \right)
\end{align*}
for any $\eta > 0$.

When $L_c = 0$, it means that $h_{\sigma, \delta}$ is large and there is no collaboration phase. Thus, in the first local training phase, we have
\begin{align*}
    &\; \sum_{i=1}^m \sum_{\vx:\Delta_{i,\vx} > \eta} \Delta_{i,\vx} n_{i,1}(\vx) \\
    \leq&\; \sum_{i=1}^m \sum_{\vx:\Delta_{i,\vx} > \eta} \left( 2 \max_{\vx \in \mathcal{A}} \langle \vmu, \vx \rangle + 2 \sqrt{ 2 \left( \max_{\vx \in \mathcal{A}} \lVert \vx \rVert_\mat{C}^2 \right) \log \frac{4km}{\delta} } \, \right) \left\lceil \frac{8 \pi_{i,\ell}(\vx) g(\pi_{i,\ell})}{m (1/2)^2} \log \frac{2k\ell(\ell+1)}{\delta} \right\rceil \\
    \leq&\; \sum_{i=1}^m \sum_{\vx:\Delta_{i,\vx} > \eta} \left( 2\max_{\vx \in \mathcal{A}} \langle \vmu, \vx \rangle + 2 \sqrt{ 2 \left( \max_{\vx \in \mathcal{A}} \lVert \vx \rVert_\mat{C}^2 \right) \log \frac{4km}{\delta} } \, \right) \left( \frac{32 \pi_{i,\ell}(\vx) d}{m} \log \frac{2k\ell(\ell+1)}{\delta} + 1 \right) \\
    \leq&\; m d(d+1) + \left( 2 \max_{\vx \in \mathcal{A}} \langle \vmu, \vx \rangle + 2 \sqrt{ 2 \left( \max_{\vx \in \mathcal{A}} \lVert \vx \rVert_\mat{C}^2 \right) \log \frac{4km}{\delta} } \, \right) \left( 32 d \log \frac{2k\ell(\ell+1)}{\delta} \right) \\
    =&\; r_1
\end{align*}
where we defined $r_1$ earlier for the case when $L_c \geq 1$. Thus, we can see that the same upper bound on the regret holds for this case. Now, for the over all regret bound
\begin{align}
    \hat{R}_{m,n}(\vmu)
    \leq&\; r_1 + mn\eta + \frac{1}{2} md(d+1) + \min \left\{ \frac{1}{h_{\sigma, \delta}}, \frac{16}{\eta} \right\} \left( 16d \log \frac{4k \left( \log_2 \frac{8}{\eta} \right)^2}{\delta} \right) \nonumber \\
    &\; + \left( \frac{16}{\eta} - \min \left\{ \frac{1}{h_{\sigma, \delta}}, \frac{16}{\eta} \right\} \right) \left(  4dm \log \frac{4km \left( \log_2 \frac{8}{\eta} \right)^2}{\delta} \right) \label{eq:upper_opt}
\end{align}
for any $\eta > 0$, we look into different cases. Instead of $h_{\sigma, \delta}$, we look into
\begin{align*}
    \tilde{h}_{\sigma, \delta}
    = \sqrt{\frac{n}{d}} h_{\sigma, \delta}
    = \left( \sqrt{ \frac{2n}{dm} \log \frac{2k}{\delta} } + \sqrt{ \frac{2n}{d} \log \frac{2k}{\delta} } \right) \sqrt{ \max_{\vx \in \mathcal{A}} \vx^\top \mat{C} \vx }
\end{align*}
First, if $\tilde{h}_{\sigma, \delta} = O \left( m^{-\frac{1}{2}} \right)$, say, $\tilde{h}_{\sigma, \delta} \leq c_1 m^{-1/2}$ for some constant $c_1 > 0$. Then, by choosing
\begin{align}
    \eta = 16c_1 \sqrt{\frac{d}{mn}} \geq 16 \sqrt{\frac{d}{n}} \tilde{h}_{\sigma, \delta} = 16 h_{\sigma, \delta},
\end{align}
we have
\begin{align*}
    \hat{R}_{m,n}(\vmu)
    \leq&\; r_1 + mn\eta + \frac{1}{2} d(d+1) + \frac{16}{\eta} \left( 16d \log \frac{4k \left( \log_2 \frac{8}{\eta} \right)^2}{\delta} \right) \\
    \leq&\; r_1 + 16 c_1 \sqrt{dmn} + \frac{1}{2} d(d+1) + \frac{16}{c_1} \sqrt{dmn} \log \left( \frac{4k \left( \log_2 \sqrt{mn / 4 c_1^2 d} \right)^2}{\delta} \right) \\
    =&\; \tilde{O} \left( \sqrt{dmn} + m d^2 \right)
\end{align*}

Secondly, if we have $\tilde{h}_{\sigma, \delta} = \Theta(m^{-\gamma})$ for some $\gamma \in [0, -\frac{1}{2}]$, say, 
\begin{align*}
    c_2 m^{-\gamma} \leq \tilde{h}_{\sigma, \delta} \leq c_3 m^{-\gamma},
\end{align*}
for some constant $c_2, c_3  > 0$.
Then, by choosing
\begin{align}
    \eta = 16 \tilde{h}_{\sigma, \delta} \sqrt{\frac{d}{n}} = 16 h_{\sigma, \delta},
\end{align}
we have
\begin{align*}
    \hat{R}_{m,n}(\vmu)
    \leq&\; r_1 + c_3 m^{1-\gamma} \sqrt{dn} + \frac{1}{2} d(d+1) + \frac{1}{c_2 m^{-\gamma}} \sqrt{\frac{n}{d}} \left( 16d \log \frac{4k \left( \log_2 m^{\gamma} \sqrt{n/4c_2^2 d} \right)^2}{\delta} \right) \\
    \leq&\; r_1 + c_3 m^{1-\gamma} \sqrt{dn} + \frac{1}{2} d(d+1) + \frac{16}{c_2} m^{\gamma} \sqrt{dn} \log \frac{4k \left( \log_2 m^{\gamma} \sqrt{n/4c_2^2 d} \right)^2}{\delta} \\
    =&\; \tilde{O} \left( m^{1-\gamma} \sqrt{dn} + md^2 \right).
\end{align*}

Finally, if we have $\tilde{h}_{\sigma, \delta} = \Omega(1)$, by choosing $\eta = \min \left\{16 \tilde{h}_{\sigma, \delta} \sqrt{d/n}, \sqrt{d/n} \right\}$, we can ontain
\begin{align*}
    \hat{R}_{m,n}(\vmu)
    =\tilde{O} \left( m \sqrt{dn} + md^2 \right).
\end{align*}

\subsection{Proof of Theorem~\ref{thm:alg_infinite}}
\label{subsec:proof_infinite}
We apply the typical $\varepsilon$-net argument to cover the unit-ball action set with a finite action set. By Corollary 4.2.13 in~\cite{vershynin2018high}, the covering number of the $d$-dimensional unit Euclidean ball satisfies
\begin{align*}
    \mathcal{N}(\mathcal{B}_d, \varepsilon) \leq \left( 1 + \frac{2}{\varepsilon} \right)^d.
\end{align*}
Let $\mathcal{A}_c$ be the covering and use $\mathcal{A}_c$ as the action set for Algorithm~\ref{alg:pb}. Plug
\begin{align*}
    k = \lvert \mathcal{A}_c \rvert \leq \left( 1 + \frac{2}{\varepsilon} \right)^d
\end{align*}
into~\eqref{eq:upper_opt}, we have
\begin{align}
    \hat{R}_{m,n}(\vmu)
    \leq&\; r_1 + mn\eta + \frac{1}{2} md(d+1) + \min \left\{ \frac{1}{h_{\sigma, \delta}}, \frac{16}{\eta} \right\} \left( 16d^2 \log \frac{4 \left( 1 + \frac{2}{\varepsilon} \right) \left( \log_2 \frac{8}{\eta} \right)^2}{\delta} \right) \nonumber \\
    &\; + \left( \frac{16}{\eta} - \min \left\{ \frac{1}{h_{\sigma, \delta}}, \frac{16}{\eta} \right\} \right) \left(  4d^2 m \log \frac{4 \left( 1 + \frac{2}{\varepsilon} \right) m \left( \log_2 \frac{8}{\eta} \right)^2}{\delta} \right) + \varepsilon m n \nonumber
\end{align}
for any $\varepsilon > 0$. Thus, if we choose $\varepsilon = O \left( \frac{1}{\sqrt{mn}} \right)$ and optimize over $\eta$ like we did for Theorem~\ref{thm:alg_finite}, we will get the results in Theorem~\ref{thm:alg_infinite} with optimal $\eta$ shifted by the order of $\sqrt{d}$ comparing to the proof of Theorem~\ref{thm:alg_finite}.

\subsection{The additional term in the upper bound}
\label{app:upper_md2}
The extra $md^2$ term in the regret upper bound comes from~\eqref{eq:alg_n_col} in our collaborative phased elimination algorithm.
As we will see in the details of Algorithm~\ref{alg:pb} in Section~\ref{sec:algorithm_analysis}, in each phase of the collaborative learning stage, we first decide how many times each action $\vx$ will be selected to reach the target error in the phase. Let us denote it by $n_{\ell}(\vx)$ for now. Then, we split the workload so that each user is responsible to select each action $\vx$ for $n_{i,\ell}(\vx) = \left\lceil n_{\ell}(\vx) / m \right\rceil$ times. Since the ceiling function can increase the value by almost 1 for each user and action in the worst-case, and the number of users is $m$ and the number of the actions to be pulled (the cardinality of support of the design) can be upper bounded by $d(d+1)/2$ (\cite{lattimore2020bandit}, Theorem~21.1), we will have at most extra $md^2$ pulls in each phase. By the way we choose the exponentially decreasing target error, this leads to an $md^2$ term in the regret. A more efficient way to split the workload might be first calculating the total rounds needed for all action combined, $n_\ell = \sum_{\vx} n_{\ell}(\vx)$, then split it among the users. In this case, the users might be selecting different actions in the phase, and our proof need to be modified since we assume that each user selects the same actions during the collaborative learning stage. After the change, we will have an extra $(m + d^2)$ pull instead. The $m$ term remains due to the fact that our problem formulation is synchronized. For instance, if the number of users $m$ is much larger than the workload $n_\ell$ in some phase $\ell$, our current formulation does not allow users to remain idle while a each of the $n_\ell$ selected user perform a pull. And thus since all users are forced to select some action in the phase, there are at least $m$ pulls instead of $n_\ell$ pulls in each stage. This cause the $m$ term in the regret. To resolve this, we have to remove the synchronized assumption on the formulation. For example, if the users are sequentially selecting actions, there can be only a part of the users that have to select actions in every phase. This prevent from the extra $m$ term in the regret. However, the regret analysis will be different from our current one.

\subsection{Sub-Gaussian population distributions}
\label{app:subgaussian}

Our results on the regret analysis for CP-PE can be translated to the case where $(\vtheta_i-\vmu)$'s are sub-Gaussian random vectors. This is because sub-Gaussian random variables have similar concentration results to Gaussian random variables. Let us formally define the term first:

\begin{definition}[Sub-Gaussian random variables]
    A random variable $\pmb{X}$ is sub-Gaussian with variance proxy $\sigma^2$ if
    \begin{align*}
        \mathbb{E} \left[ e^{\lambda \pmb{X}} \right] \leq e^{\frac{\lambda^2 \sigma^2}{2}}
    \end{align*}
    for all $\lambda \in \mathbb{R}$.
\end{definition}

With the definition of a sub-Gaussian random variable, we can extend the idea to a random vector.

\begin{definition}[Sub-Gaussian random vectors]
    A random vector $\pmb{X}$ in the $d$-dimension space is said to be sub-Gaussian with variance proxy $\sigma^2$ if it is centered and for any $\vu \in \mathbb{R}^d$ such that $\lVert \vu \rVert_2 = 1$, the real random variable $\vu^\top \pmb{X}$ is sub-Gaussian with variance proxy $\sigma^2$. We write $\pmb{X} \sim \mathrm{subG}(\sigma^2)$.
\end{definition}

Let $\vu_i = \vtheta_i-\vmu$ follow a sub-Gaussian distribution with variance proxy $\sigma^2$ for each $i \in [m]$. Then, for any $\vx \in \mathcal{B}_d^2$, $\vu_i^\top \vx$ is $\mathrm{subG}(\sigma^2)$ and thus
\begin{align}
    \left\langle \frac{1}{m} \sum_{i=1}^m \vu_i, \vx_i \right\rangle \sim \mathrm{subG}( \sigma^2 / m).
\end{align}
From the concentration property of sub-Gaussian random variables (\cite{wainwright2019high}, Chapter~2.1), we have
\begin{align}
    \mathbb{P} \left\{ \left\langle \frac{1}{m} \sum_{i=1}^m \vu_i, \vx_i \right\rangle \geq s \right\}
    \leq \exp \left\{ - \frac{m s^2}{2 \sigma^2} \right\}. \label{eq:upper_proof_sub_gaus}
\end{align}
Recall that in Section~\ref{subsec:proof_finite}, the regret analysis of CP-PE, the Gaussianity of $\vtheta_i$ is only used for the additivity of Gaussianity and its concentration properties. Since similar properties are also established for sub-Gaussian random variables in~\eqref{eq:upper_proof_sub_gaus}, we can simply replace~\eqref{eq:upper_proof_gaus} with sub-Gaussian $\vu_i$'s, the replace~\eqref{eq:upper_proof_gaus_concen} and~\eqref{eq:upper_proof_gaus_concen2} with the concentration inequality of sub-Gaussian random variables. The rest of the proof is essentially the same. Additionally, CP-PE is not using the fact that $\vtheta_i$ follows a Gaussian distribution. Thus, our algorithm and the regret analysis also work for any sub-Gaussian population distribution.

\section{Experiments}
\label{sec:experiments}

In our experiments, we compare {CP-PE} with Federated Phased Elimination ({Fed-PE}, all agents perform phased elimination as if they have the same unknown parameter) and Individual Phased Elimination ({Ind-PE}, each agent performs phased elimination on it own and evaluate the joint regret). We pick the population parameter $\sigma$ such that the heterogeneity falls in the second regime, and {CP-PE} usually performs one to two phases in the collaborative stage depending on the instance of $\{\vtheta_i\}_{i=1:m}$ in each run.

\subsection{Experiment setting}
\label{app:exp_setting}
We compare {CP-PE}, {Fed-PE}, and {Ind-PE} on synthetic datasets. In the first experiment, we set the number of users $m=100$, number of rounds $n=15000$, and dimension $d=2$. We set $\vmu = [1 \; 0]^\top$, $\sigma = 0.3$. We let the action set $\mathcal{A}$ consist of 10 actions ($k=10$) uniformly distributed on the unit ball. We set the variance of noise to be $\sigma_0^2 = 1$ and $\delta = 0.01$. In each run, we randomly sample the unknown parameter $\vtheta_i \sim \mathcal{N}(\vmu, \sigma^2 \mat{I})$ for $i \in [m]$. Then, we apply all {CP-PE}, {Fed-PE}, and {Ind-PE} on the instance of $\{\vtheta_i\}_{i=1:m}$. We evaluate the joint cumulative regret for each algorithm and normalized it by the number of users $m$. Our result is averaged over 80 independent runs and the 1-sigma error bar is plotted in the figure.

\subsection{Experiment results}

\begin{figure*}[t!]
    \centering
    \begin{subfigure}[t]{0.45\textwidth}
        \centering
        \includegraphics[width=1\linewidth, trim={0 0 0 0},clip]{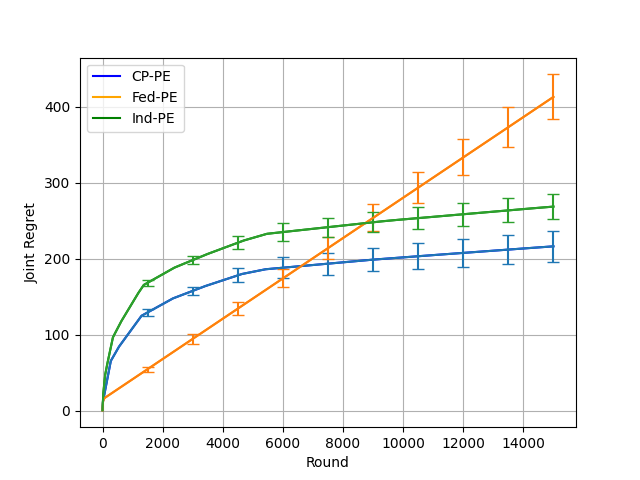}
        \caption{Joint cumulative regret obtained by CP-PE, Fed-PE, and Ind-PE. In the experiment, we have $m=100$, $n=15000$, $d=2$, $\sigma=0.3$.}
        \label{fig:m100}
    \end{subfigure}%
    ~ 
    \begin{subfigure}[t]{0.45\textwidth}
        \centering
        \includegraphics[width=1\linewidth, trim={0 0 0 0},clip]{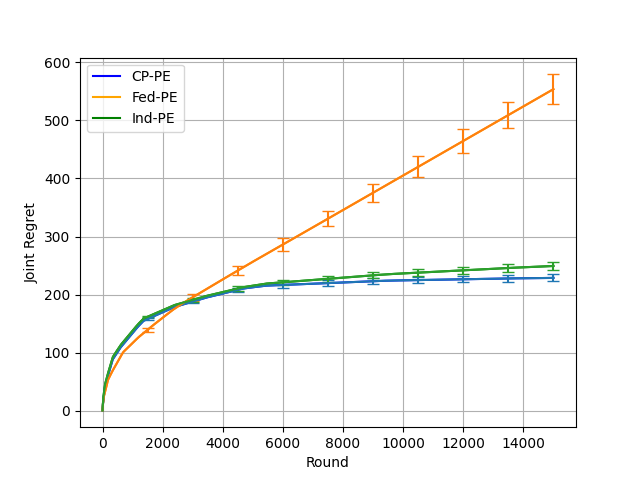}
        \caption{Joint cumulative regret obtained by CP-PE, Fed-PE, and Ind-PE. In the experiment, we have $m=2$, $n=15000$, $d=2$, $\sigma=0.3$.}
        \label{fig:m2}
    \end{subfigure}
    \caption{Comparing regret obtained by CP-PE, Fed-PE, and Ind-PE.}
\end{figure*}

In the first experiment, there are 100 users ($m=100$) and the level of heterogeneity $\sigma=0.3$. Figure~\ref{fig:m100} shows the joint cumulative regret for each method. We can see CP-PE has the best final regret. Although {Fed-PE} has the best performance in the beginning, the cumulative regret grows linearly in $n$ and is outperformed by {CP-PE} and {Ind-PE} eventually. The head start of {Fed-PE} is due to the benefit of collaboration, which significantly increases the effective sample size and eliminates the bad actions rapidly. However, without personalization, agents are forced to find a global best action that can be distinct from each agent's individual best action, and thus a linear regret is inevitable. On the other hand, {CP-PE} benefits from the collaborative learning stage and switch to the personalized learning stage before each agent's best action is eliminated. The transition happens in a few of the early rounds since the first few phases in phased elimination are composed of much fewer pulls of actions compared to the later phases, and the pulls are even equally divided among each user in the collaborative learning stage. By monitoring each run, we can see that agents under CP-PE rapidly shrink the action set in the collaborative learning stage and start with a much smaller action set when starting the personalized learning stage. Thus, CP-PE obtains a better regret than {Ind-PE}.

In another experiment, we decrease the number of users to $m=2$ and the result is averaged over 350 independent runs. The joint cumulative regrets obtained by CP-PE, Fed-PE, and Ind-PE are shown in Figure~\ref{fig:m2}. We can see that the plot looks similar to Figure~\ref{fig:m100} since we also consider the regime where CP-PE usually perform one to two phases in the collaborative learning stage and switch to the personalized learning stage. The difference shown in the plot is that since the number of users $m$ decreases, the performance gap between CP-PE and Ind-PE becomes smaller. Comparing the result to Figure~\ref{fig:m100}, we can see the benefit of collaboration scale with the number of users.
\section{Conclusion}

We provide the first complete characterization of the benefit of collaboration under our hierarchical/empirical Bayesian framework for heterogeneous linear bandits. Overcoming technical challenges, we establish a minimax lower bound and propose CP-PE, an algorithm that achieves minimax-optimal regret up to polylogarithmic factors. Despite this complete characterization, several important directions remain open for future work. First, one can consider alternative heterogeneity models beyond the Gaussian/sub-Gaussian population studied in this work. Second, our proposed algorithm CP-PE depends on the known level of heterogeneity.
How agents can collaborate without this knowledge, and whether there is a fundamental gap in regret from the known heterogeneity case, remains an open and important question of ongoing investigations.

\newpage
\bibliographystyle{plain}
\bibliography{refs}

\newpage
\appendix

\end{document}